%%%%%%%% ICML 2019 EXAMPLE LATEX SUBMISSION FILE %%%%%%%%%%%%%%%%%

\documentclass{article}

% Recommended, but optional, packages for figures and better typesetting:
\usepackage{graphicx}
\usepackage{amssymb,amsmath,color}

% hyperref makes hyperlinks in the resulting PDF.
% If your build breaks (sometimes temporarily if a hyperlink spans a page)
% please comment out the following usepackage line and replace
% \usepackage{icml2019} with \usepackage[nohyperref]{icml2019} above.
\usepackage{hyperref}

  \hypersetup{colorlinks,
            linkcolor=blue,
            citecolor=blue,
            urlcolor=magenta,
            linktocpage,
            plainpages=false}

\RequirePackage{natbib}

\bibliographystyle{plainnat}
\bibpunct{(}{)}{;}{a}{,}{,}

\usepackage[mathcal]{eucal}

\def \E{{\mathbb E}}

\newcommand{\BEAS}{\begin{eqnarray*}}
\newcommand{\EEAS}{\end{eqnarray*}}
\newcommand{\BEA}{\begin{eqnarray}}
\newcommand{\EEA}{\end{eqnarray}}
\newcommand{\BEQ}{\begin{equation}}
\newcommand{\EEQ}{\end{equation}}
\newcommand{\BIT}{\begin{itemize}}
\newcommand{\EIT}{\end{itemize}}
\newcommand{\BNUM}{\begin{enumerate}}
\newcommand{\ENUM}{\end{enumerate}}
\newcommand{\BA}{\begin{array}}
\newcommand{\EA}{\end{array}}

\newcommand{\tr}{\mathop{ \rm tr}}

\newcommand{\idm}{I}
\newcommand{\rb}{\mathbb{R}}

\newcommand{\BlackBox}{\rule{1.5ex}{1.5ex}}  % end of proof

\newcommand{\mysec}[1]{Section~\ref{sec:#1}}
\newcommand{\eq}[1]{Eq.~(\ref{eq:#1})}
\newcommand{\myfig}[1]{Figure~\ref{fig:#1}}

\newenvironment{proof}{\par\noindent{\bf Proof\ }}{\hfill\BlackBox\\[2mm]}
\newtheorem{lemma}{Lemma}

\newtheorem{proposition}{Proposition}

\oddsidemargin .25in    %   Note \oddsidemargin = \evensidemargin
\evensidemargin .25in \marginparwidth 0.07 true in
%\marginparwidth 0.75 true in
%\topmargin 0 true pt           % Nominal distance from top of page to top of
%\topmargin 0.125in
\topmargin -0.5in \addtolength{\headsep}{0.25in}
\textheight 8.5 true in       % Height of text (including footnotes & figures)
\textwidth 6.0 true in        % Width of text line.
\widowpenalty=10000 \clubpenalty=10000

\title{On the Effectiveness of Richardson Extrapolation\\ in Machine Learning}
\author{Francis Bach  \\
INRIA - D\'epartement d'Informatique de l'Ecole Normale Sup\'erieure \\
PSL Research University
Paris, France \\
\texttt{francis.bach@inria.fr}}

\date{\today}

\begin{document}

 \maketitle
  
\begin{abstract}
Richardson extrapolation is a classical technique from numerical analysis that can improve the approximation error of an estimation method by combining linearly several estimates obtained from different values of one of its  hyperparameters, without the need to know in details the inner structure of the original estimation method. The main goal of this paper is to study when Richardson extrapolation can be used within machine learning, beyond the existing applications to step-size adaptations in stochastic gradient descent. We identify two situations where Richardson interpolation can be useful: (1) when the hyperparameter is  the number of iterations of an existing iterative optimization algorithm, with applications to  averaged gradient descent and Frank-Wolfe algorithms (where we obtain asymptotically rates of $O(1/k^2)$ on polytopes, where $k$ is the number of iterations), and (2) when it is a regularization parameter, with applications to Nesterov smoothing techniques for minimizing non-smooth functions (where we obtain asymptotically rates close to $O(1/k^2)$ for non-smooth functions), and ridge regression. In all these cases, we show that extrapolation techniques come with no significant loss in performance, but with sometimes strong gains, and we  provide theoretical justifications based on asymptotic developments for such gains, as well as empirical illustrations on classical problems from machine learning. \end{abstract}

\section{Introduction}

Many machine learning methods can be cast as looking for approximations of some ideal quantity which cannot be readily computed from the data at hand: this ideal quantity can be the predictor learned from infinite data, or an iterative algorithm run for infinitely many iterations.  Taking theirs roots in optimization and more generally numerical analysis, many accelerations techniques have been developed to tighten these approximations with as few changes as possible to the original method.

While some acceleration techniques add some simple modifications to a known algorithm, such as Nesterov acceleration for the gradient descent method~\citep{nesterov1983method}, \emph{extrapolation} techniques are techniques that do not need to know the fine inner structure of the method to be accelerated. These methods are only based on the observations of solutions of the original method.

Extrapolation techniques work on the vector-valued output $x_t\in \rb^d$ of the original method that depends on some controllable real-valued quantity $t$, which can be the number of iterations or some regularization parameter, and more generally any parameter that controls both the running time and the approximation error of the algorithm.  When $t$ tends to~$t_\infty$ (which is typically $0$ or $+\infty$), we will assume that $x_t$ has an asymptotic expansion of the form 
$$x_t= x_\ast + g_t+ O(h_t),$$
where $x_\ast$ is the desired output,  $g_t \in \rb^d$ is the asymptotic equivalent of $x_t - x_\ast$, and $h_t = o( \| g_t\|)$. The key question in extrapolation is the following:  from the knowledge of $x_t$ for potentially several $t$'s, how can we better approximate~$x_\ast$, \emph{without the full knowledge} of $g_t$? 

For exponentially converging algorithms, there exist several ``non-linear'' schemes that combine linearly several values of $x_t$ with weights that depend non-linearly on the iterates, such as Aitken's $\Delta^2$ process~\citep{aitken1927xxv} or  Anderson acceleration~\citep{anderson1965iterative}, which has recently been shown to provide significant acceleration to linearly convergent gradient-based algorithms~\citep{scieur2016regularized}. In this paper, we consider dependence in powers of~$t$, where Richardson extrapolation excels~\citep[see, e.g.,][]{richardson1911ix,joyce1971survey,gautschi1997numerical}.

\begin{figure}
\begin{center}
\includegraphics[scale=1.2]{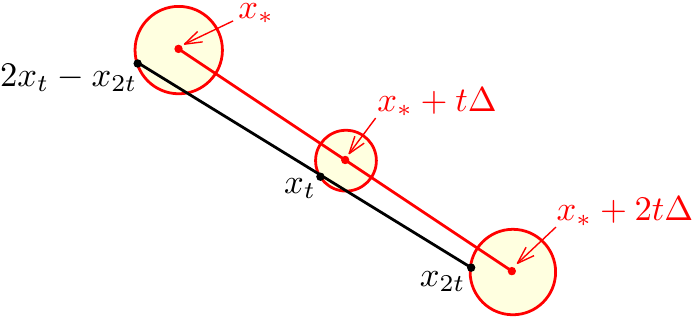}  

\vspace*{-.25cm}

\caption{Illustration of Richardson extrapolation for $t_\infty = 0$ and $x_t = x_\ast + t \Delta + O(t^2)$. Iterates (in black) with their first-order expansions (in red). The deviations (represented by circles) are of order $O(t^2)$. Adapted from~\citet{dieuleveut2017bridging}. \label{fig:richardson}}
\end{center}
\end{figure}

We thus assume that 
$$g_t =  t^\alpha \cdot \Delta, $$
and $h_t = t^\beta$ is a power of $t$ such that $h_t= o(\| g_t\| )$, where $\alpha \in \rb$ is known but $\Delta \in \rb^d$ is unknown, that is
$$
x_t= x_\ast + t^\alpha \Delta+ O(t^\beta).
$$
In all our cases, $\alpha=-1$ when $t_\infty=+\infty$ and $\alpha =1$ when $t=0$. Richardson extrapolation is simply combining two iterates with different values of $t$ so that the zero-th order term $x_\ast$ is preserved, while the first-order term cancels, for example:
\BEAS
& & 2 x_t - x_{2^{1/\alpha} t } \\
& \!\!= \!\!& 2 ( x_\ast + t^\alpha \Delta  + O(t^\beta)  ) - ( x_\ast +  2t^\alpha \Delta  + O(t^\beta) ) \\
&\!\! =\!\! & x_\ast + O(t^\beta).
\EEAS
See an illustration in \myfig{richardson} for $\alpha=1$, $\beta=2$, and $t_\infty=0$. Note that: (a) the choice of $2^{1/\alpha} \neq 1 $ as a multiplicative factor is arbitrary and chosen for its simplicity when $|\alpha|=1$, and   (b) Richardson extrapolation can be used with $m+1$ iterates to remove the first $m$ terms in a asymptotic expansion, where the powers of the expansion are known and not the associated vector-valued constants (see examples in \mysec{regularization}).
 
The main goal of this paper is to study when Richardson extrapolation can be used within machine learning, beyond the existing explicit applications to step-size adaptations in stochastic gradient descent~\citep{durmus2016stochastic,dieuleveut2017bridging}, and implicit wide-spread use in ``tail-averaging'' (see more details in \mysec{tail}).

We identify two situations where Richardson interpolation can be useful:
\BIT
\item $t=k$ is the number of iterations of an existing iterative optimization algorithm converging to $x_\ast$, where then $\alpha = -1$ and $t_\infty=+\infty$, and Richardson extrapolation considers, for $k$ even, $x_k^{(1)} = 2x_k - x_{k/2}$. We consider in \mysec{iteration}, averaged gradient descent and Frank-Wolfe algorithms (where we obtain asymptotically rates of $O(1/k^2)$ on polytopes, where $k$ is the number of iterations).

\item $t=\lambda$ is a regularization parameter, where then $\alpha=1$ and $t_\infty = 0$, and Richardson extrapolation considers $x^{(1)}_\lambda = 2x_\lambda - x_{2\lambda}$. We consider in \mysec{regularization}, Nesterov smoothing techniques for minimizing non-smooth functions (where we obtain asymptotically rates close to $O(1/k^2)$ for non-smooth functions), and ridge regression (where we obtain estimators with lower bias).

\EIT
As we will show, extrapolation techniques come with no significant loss in performance, but with sometimes strong gains, and the goal of this paper is to provide theoretical justifications for such gains, as well as empirical illustrations on classical problems from machine learning. Note that  we aim for the simplest asymptotic results (most can be made non-asymptotic with extra assumptions).

\section{Extrapolation on the  Number of Iterations}
\label{sec:iteration}

In this section, we consider extrapolation based on the number of iterations $k$, that is, for the simplest case 
$$x_k^{(1)} = 2x_k - x_{k/2},$$ for optimization algorithms aimed at minimizing  on $\rb^d$
If $x_k$ is converging to a minimizer $x_\ast$, then so is $x_{k/2}$, and thus also $x_k^{(1)} = 2x_k - x_{k/2}$; moreover, we have
$\|x_k^{(1)} - x_\ast\|_2 \leqslant 2 \| x_k - x_\ast\|_2 + \| x_{k/2} - x_\ast\|_2$, so even if there are no cancellations,  performance is never significantly  deteriorated (the risk is essentially to lose half of the iterations). 

The potential gains depend  on the way $x_k$ converges to $x_\ast$. The existence of a convergence rate of the form $f(x_k) -f(x_\ast) = O(1/k)$  or $O(1/k^2)$ is not enough, as Richardson extrapolation requires a specific direction of asymptotic convergence. As illustrated in \myfig{oscillating}, some algorithms are oscillating around their solutions, while some converge with a specific direction. Only the latter ones can be accelerated with Richardson extrapolation, while the former ones are good candidates for Anderson acceleration~\citep{anderson1965iterative,scieur2016regularized}.

We now consider three algorithms: (1) averaged gradient descent, where extrapolation is at its best, as it transforms an $O(1/t^2)$ convergence rate into an exponential one, (2) accelerated gradient descent, where extrapolation does not bring anything, and (3) Frank-Wolfe algorithms, where the situation is mixed (sometimes it helps, sometimes it does not).

\begin{figure}
\begin{center}
\includegraphics[scale=1.2]{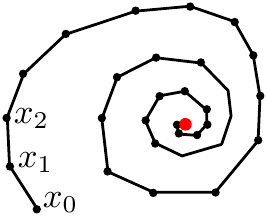} \hspace*{1cm}
\includegraphics[scale=1.2]{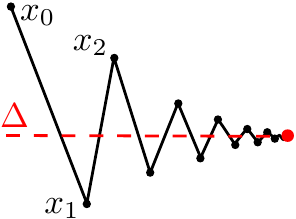}

\vspace*{-.2cm}

\caption{Left: Oscillating convergence, where Richardson extrapolation does not lead to any gain. Right: non-oscillating  convergence, with a main direction $\Delta$ (in red dotted), where Richardson extrapolation can be beneficial if the oscillations orthogonal to the direction $\Delta$ are negligible compared to convergence along the direction~$\Delta$. \label{fig:oscillating}}
\end{center}
\end{figure}

\subsection{Averaged gradient descent}
\label{sec:tail}
We consider the usual gradient descent algorithm
$$x_k = x_{k-1} - \gamma f'(x_{k-1}),$$
where $\gamma \geqslant 0$ is a step-size, 
with Polyak-Ruppert averaging~\citep{polyak1992acceleration,ruppert}:
$$
\bar{x}_k = \frac{1}{k} \sum_{i=0}^{k-1} x_i.
$$
Averaging is key to robustness to potential noise of the gradients~\citep{polyak1992acceleration,nemirovski2009robust}. However it comes with the unintended consequence of losing the exponential forgetting of initial conditions for strongly convex problems~\citep{moulines2011non}.

A common way to restore exponential convergence (up to the noise level in the stochastic case) is to consider ``tail-averaging'', that is, to replace $\bar{x}_k$ by the average of only the latest $k/2$ iterates~\citep{JMLR:v18:16-595}. As shown below for $k$ even, this corresponds exactly to Richardson extrapolation:
$$
\frac{2}{k} \sum_{i=k/2}^{k-1} x_i
= 
\frac{2}{k} \sum_{i=0}^{k-1} x_i - \frac{2}{k} \sum_{i=0}^{k/2-1} x_i 
=
2 \bar{x}_k - \bar{x}_{k/2}.
$$
While \citet{JMLR:v18:16-595} focuses on a non-asymptotic analysis for stochastic problems for least-squares regression, we now provide an asymptotic analysis for general convex objective functions and non-stochastic problems (see proof in Appendix~\ref{app:gd}).

\begin{proposition}
\label{prop:gd}
Assume   $f$  convex, three-times differentiable with Hessian eigenvalues bounded by~$L$, bounded third-order derivatives, and a unique minimizer $x_\ast \in \rb^d$ such that $f''(x_\ast) $ is positive definite. If $\gamma \leqslant 1/L$, then 
$$
\bar{x}_k = x_\ast + \frac{1}{k} \Delta + O(\exp(- k \lambda ) ),
$$
where  $\Delta = \sum_{i=0}^\infty (x_i - x_\ast)$ and $\lambda$ is proportional to  $ \gamma\lambda_{\min}(f''(x_\ast))$.
\end{proposition}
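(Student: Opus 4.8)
The plan is to show that the (deterministic) gradient descent iterates converge to $x_\ast$ \emph{linearly}, so that the series $\Delta = \sum_{i=0}^\infty (x_i - x_\ast)$ converges absolutely, and then to read off the expansion for $\bar x_k$ from a one-line manipulation of partial sums. First I would establish global convergence $x_k \to x_\ast$: since $f$ is convex with $L$-Lipschitz gradient (Hessian eigenvalues bounded by $L$) and $\gamma \leqslant 1/L$, the map $x \mapsto x - \gamma f'(x)$ is nonexpansive with $x_\ast$ as a fixed point, so $\|x_k - x_\ast\|_2$ is non-increasing; combined with the standard $f(x_k) - f(x_\ast) = O(1/k)$ descent estimate and uniqueness of the minimizer, $x_k$ enters, after some finite index $k_0$, any prescribed ball $\mathcal{N} = \{x : \|x - x_\ast\|_2 \leqslant r\}$.

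Second, I would upgrade this to a linear rate inside $\mathcal{N}$. Because $f$ is $C^2$ and $f''(x_\ast)$ is positive definite, for $r$ small enough every Hessian $f''(x)$ with $x \in \mathcal{N}$ has spectrum in $[\mu, L]$ with $\mu = \lambda_{\min}(f''(x_\ast)) - \varepsilon > 0$. Writing $x - \gamma f'(x) - x_\ast = \bigl(I - \gamma \int_0^1 f''(x_\ast + t(x-x_\ast))\,dt\bigr)(x - x_\ast)$ and using $\gamma L \leqslant 1$, the matrix in parentheses has spectrum in $[0, 1-\gamma\mu]$, so on $\mathcal{N}$ the gradient step contracts by a factor $\rho \leqslant 1 - \gamma\mu < 1$. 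Since $\|x_k - x_\ast\|_2$ is non-increasing, the ball $\mathcal{N}$ is forward-invariant once entered, hence $\|x_k - x_\ast\|_2 \leqslant C \rho^k$ for all $k$, the constant $C$ absorbing the first $k_0$ iterations. Setting $\lambda = -\log \rho$ gives $\lambda$ bounded below by a universal constant times $\gamma\lambda_{\min}(f''(x_\ast))$, since $-\log(1-\gamma\mu) \sim \gamma\mu$ for small $\gamma$.

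Third, the bound $\|x_i - x_\ast\|_2 \leqslant C\rho^i$ makes $\Delta = \sum_{i=0}^\infty (x_i - x_\ast)$ absolutely convergent, and then
$$
\bar x_k - x_\ast \;=\; \frac1k \sum_{i=0}^{k-1} (x_i - x_\ast) \;=\; \frac1k \Delta \;-\; \frac1k \sum_{i=k}^\infty (x_i - x_\ast),
$$
where the tail term is bounded in norm by $\frac{C}{k}\cdot\frac{\rho^k}{1-\rho} = O(\exp(-k\lambda))$. This is exactly the claimed statement.

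The only real work is the transition between the two regimes: the global analysis only yields a sublinear $O(1/k)$ rate, so one must argue carefully that once the iterates are close enough to $x_\ast$ the local strong convexity takes over and produces the exponential rate, with all the slack from the burn-in phase pushed into the multiplicative constant $C$; forward-invariance of $\mathcal{N}$ (a consequence of nonexpansiveness for $\gamma\leqslant 1/L$) is what makes this clean. I would also note that the third-order-derivative hypothesis is not needed for this proposition — $C^2$ regularity with $f''(x_\ast)\succ 0$ suffices — it is retained for consistency with the other results and for possible non-asymptotic refinements.
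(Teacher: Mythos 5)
Your proof is correct and follows the same two-phase structure as the paper's (Appendix~\ref{app:gd}): a burn-in phase driving the iterates into a neighborhood of $x_\ast$, a local linear-rate phase coming from $f''(x_\ast)\succ 0$, and then the same elementary identity $\bar{x}_k - x_\ast = \frac{1}{k}\Delta - \frac{1}{k}\sum_{i\geq k}(x_i-x_\ast)$ with the tail controlled by the geometric rate. Where you genuinely diverge is the mechanism for the transition: the paper invokes Lemma~\ref{lemma:2} (a level $c$ such that $f(x)-f(x_\ast)\leq c$ forces $f''(x)\succcurlyeq\frac{1}{2}f''(x_\ast)$) and relies on monotone decrease of $f(x_k)$ to keep iterates in that sub-level set, whereas you use nonexpansiveness of $x\mapsto x-\gamma f'(x)$ for $\gamma\leq 1/L$ to make a small Euclidean ball around $x_\ast$ forward-invariant and then contract inside it via the averaged-Hessian identity $x-\gamma f'(x)-x_\ast = (I-\gamma\int_0^1 f''(x_\ast+t(x-x_\ast))\,dt)(x-x_\ast)$. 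Your route buys two small things: forward-invariance is immediate (no sub-level-set geometry needed, which is worth something since the implication in Lemma~\ref{lemma:2} requires connectedness/convexity of the sub-level set to be airtight), and it makes visible that $C^2$ regularity with $f''(x_\ast)\succ 0$ already suffices for this proposition --- the bounded-third-derivative hypothesis is there chiefly to quantify constants and to reuse Lemmas~\ref{lemma:1}--\ref{lemma:2} for the other results. Both approaches land on $\lambda$ proportional to $\gamma\lambda_{\min}(f''(x_\ast))$.
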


 Note that:
(a) before Richardson extrapolation, the asymptotic convergence rate will be of the order $O(1/k^2)$, which is better than the usual $O(1/k)$ upper-bound for the rate of gradient descent, but with a stronger assumption that  in fact leads to exponential convergence before averaging,   (b) while $\Delta$ has a simple expression, it cannot be computed in practice,  (c) that Richardson extrapolation leads to an exponentially convergent algorithm from an algorithm converging asymptotically in $O(1/k^2)$ for functions values, and (d) that in the presence of noise in the gradients, the exponential convergence would only be up to the noise level. See \myfig{gd} (left plot) for an illustration with noisy gradients.

\begin{figure}
\begin{center}
\hspace*{-.2cm} \includegraphics[scale=.4]{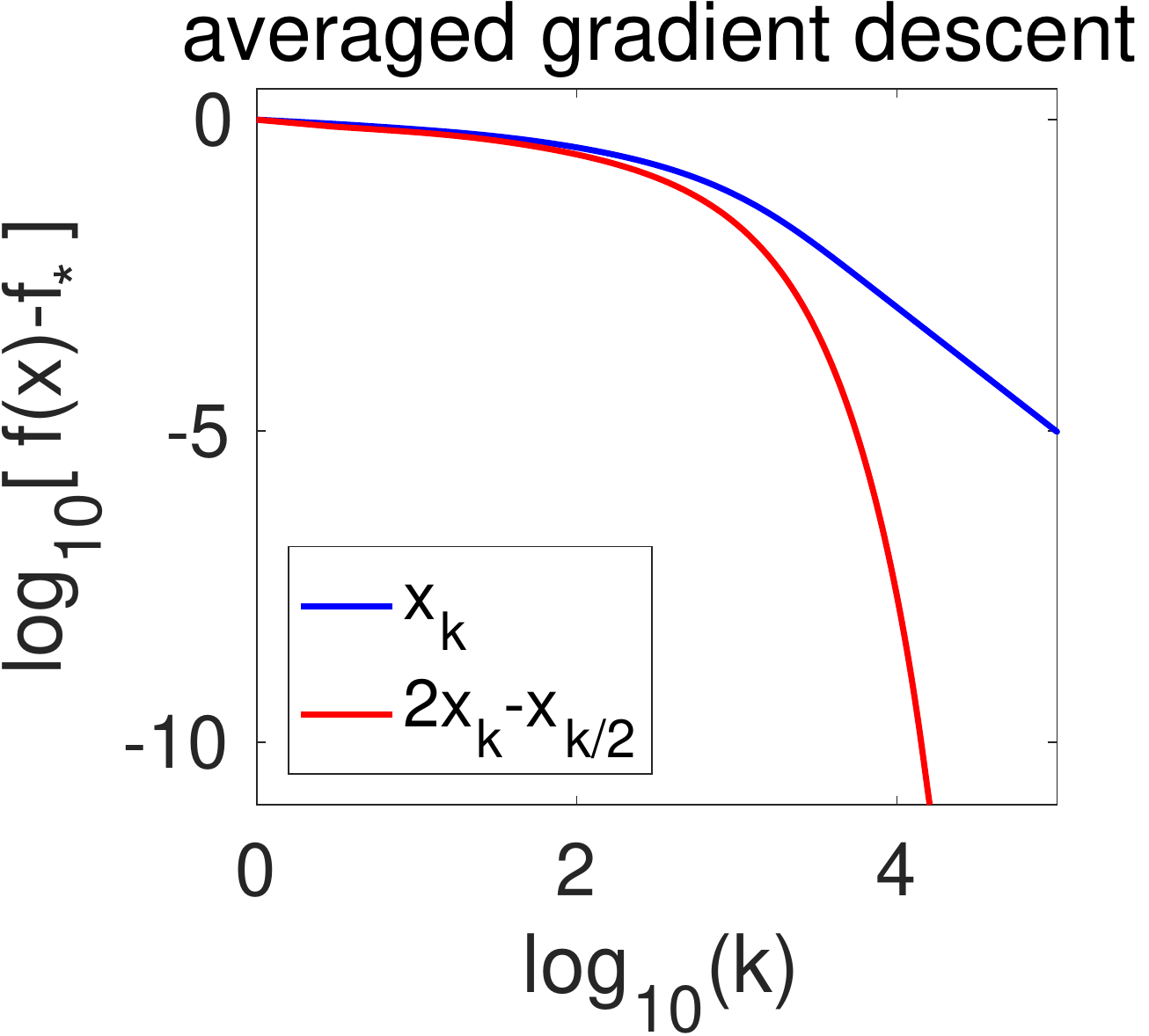} \hspace*{.1cm}
\includegraphics[scale=.4]{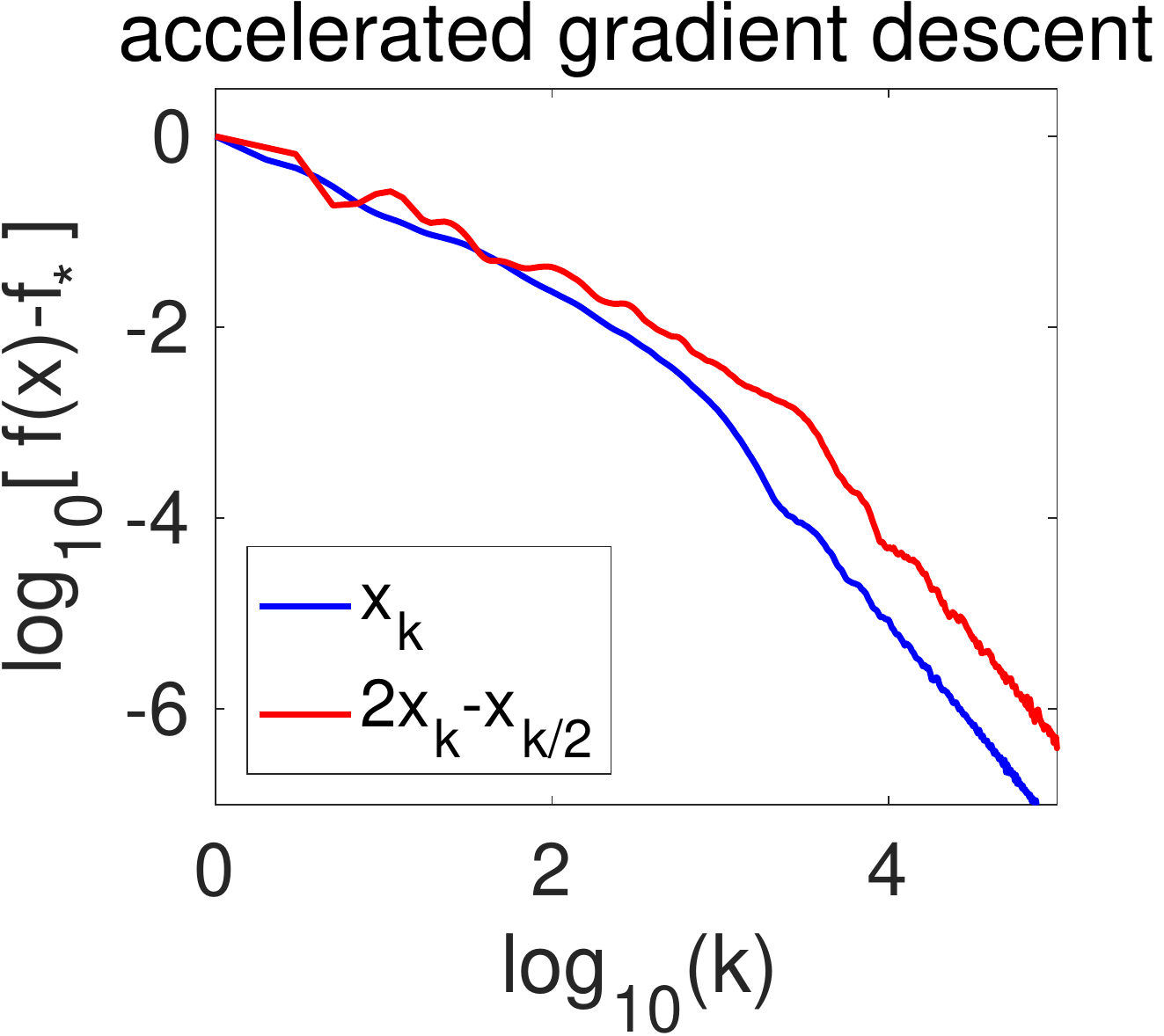} \hspace*{-.2cm} 

\vspace*{-.2cm}

\caption{Left: averaged gradient descent on a logistic regression problem in dimension $d=400$, and with $n=4000$ observations
$
\min_{x \in \rb^d} f(x) = \frac{1}{n} \sum_{i=1}^n \log( 1+ \exp( - b_i x^\top a_i))
$, with $(a_i,b_i) \in \rb^d \times \{-1,1\}$. The covariance matrix of (Gaussian) inputs has eigenvalues  $1/j$, $j =1,\dots,d$; the lowest eigenvalue is $1/400$ and therefore we can see the effect of strong convexity starting between $k=100$ and $1000$ iterations; moreover, for the regular averaged recursion, the line in the log-log plot has slope $-2$. Right: accelerated gradient descent on a quadratic optimization problem in dimension $d=1000$ and a Hessian whose eigenvalues are $1/j^2$, $j =1,\dots,d$; with such eigenvalues, the local linear convergence is not observed and we have a line of slope~$-2$.  \label{fig:gd}}
\end{center}
\end{figure}

\subsection{Accelerated gradient descent}
In the section above, we considered averaged gradient descent, which is asymptotically converging as $O(1/k^2)$ and on which Richardson extrapolation could be used with strong gains. Is it possible also for the accelerated gradient descent~\citep{nesterov1983method}, which has a (non-asymptotic) convergence rate of $O(1/k^2)$ for convex functions?

It turns out that the behavior of the iterates of accelerated gradient descent is exactly of the form depicted in the left plot of \myfig{oscillating}: that is, the iterates $x_k$ oscillate around the optimum, as can be seen from the spectral analysis for quadratic problems, in continuous time~\citep{su2016differential} or discrete time~\citep{flammarion2015averaging}. Richardson extrapolation is of no help, but is not degrading performance too much. See \myfig{gd} (right plot) for an illustration.

\subsection{Frank-Wolfe algorithms}
We now consider Frank-Wolfe algorithms (also known as conditional gradient algorithms) for minimizing a smooth convex function $f$ on a compact convex set $\mathcal{K}$. These algorithms are dedicated to situations where one can easily minimize linear functions on $\mathcal{K}$~\citep[see, e.g.,][and references therein]{jaggi2013revisiting}. The algorithm has the following form:
\BEAS
\bar{x}_k & \in & \arg\min_{x \in \mathcal{K}}\  f(x_{k-1}) + f'(x_{k-1})^\top (x -x_{k-1})  \\
x_k & = & ( 1- \rho_k) x_{k-1} + \rho_k \bar{x}_k.
\EEAS
That is, the first order Taylor expansion of $f$ at $x_{k-1}$ is minimized, ending up typically in an extreme point $\bar{x}_k$ or $\mathcal{K}$ and a convex combination of $x_{k-1}$ and $\bar{x}_k$ is considered. While some form of line search can be used to find $\rho_k$, we consider so-called ``open loop'' schemes where $\rho_k = 1/k$ or $\rho_k = 2/(k+1)$~\citep{dunn1978conditional,jaggi2013revisiting}.

In terms of function values, these two variants are known to converge at respective rates $O( \log(k) / k)$ and $O(1/k)$. Moreover, as illustrated in \myfig{FWzigzag}, they are known to zig-zag towards the optimal point. Avoiding this phenomenon can be done in several ways, for examples through optimizing over all convex combinations of the $\bar{x}_i$'s for $i \leqslant k$~\citep{von1977simplicial}, or through so-called ``away steps''~\citep{guelat1986some,lacoste2015global}. In this section, we consider Richardson extrapolation and assume for simplicity that $\mathcal{K}$ is a polytope (which is a typical use case for Frank-Wolfe algorithms). Note here that we are considering asymptotic convergence rates, and even without extrapolation (but with a local strong-convexity assumption), we can beat the $O(1/k)$ rates for the step-size $\rho_k = 2/(k+1)$.\footnote{Note that: (a) the lower bound with dependence $O(1/k)$ from~\citet{canon1968tight} only applies  to Frank-Wolfe algorithms with line-search, and (b) that our bounds are local and the constants have to depend on dimension so as to not contradict the lower bound from~\citet{jaggi2013revisiting}.}

\begin{figure}
\begin{center}
\includegraphics[scale=1.2]{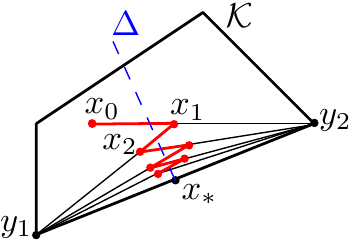}

\vspace*{-.2cm}

\caption{Frank-Wolfe algorithm zigzagging. Starting from $x_0$, the algorithm always moves towards one of the extreme points of~$\mathcal{K}$, with an average direction of $\Delta$.\label{fig:FWzigzag}   }
\end{center}
\end{figure}

\begin{figure}
\begin{center}
\hspace*{-.2cm}
\includegraphics[scale=.41]{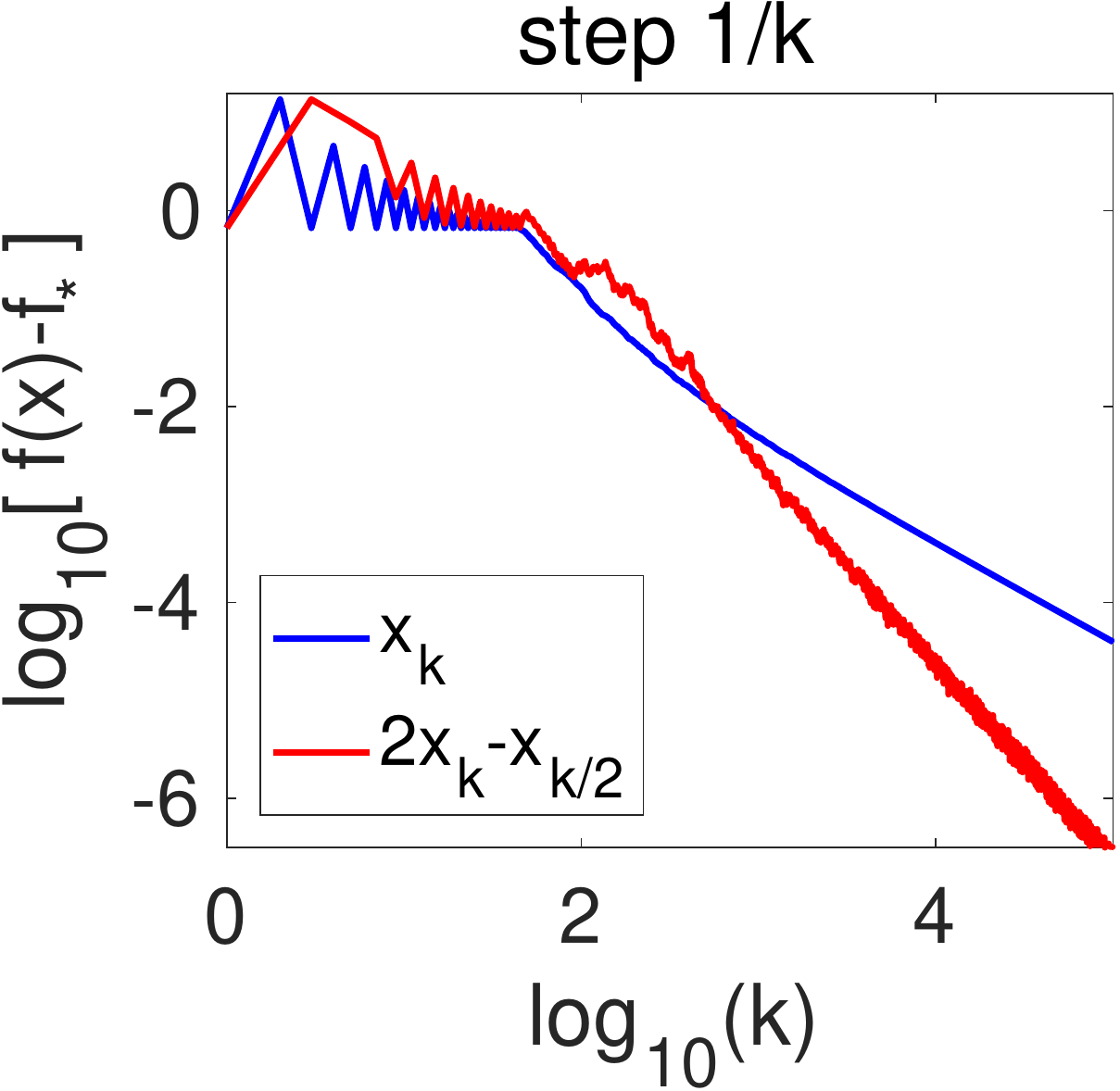}\hspace*{.4cm}
\includegraphics[scale=.41]{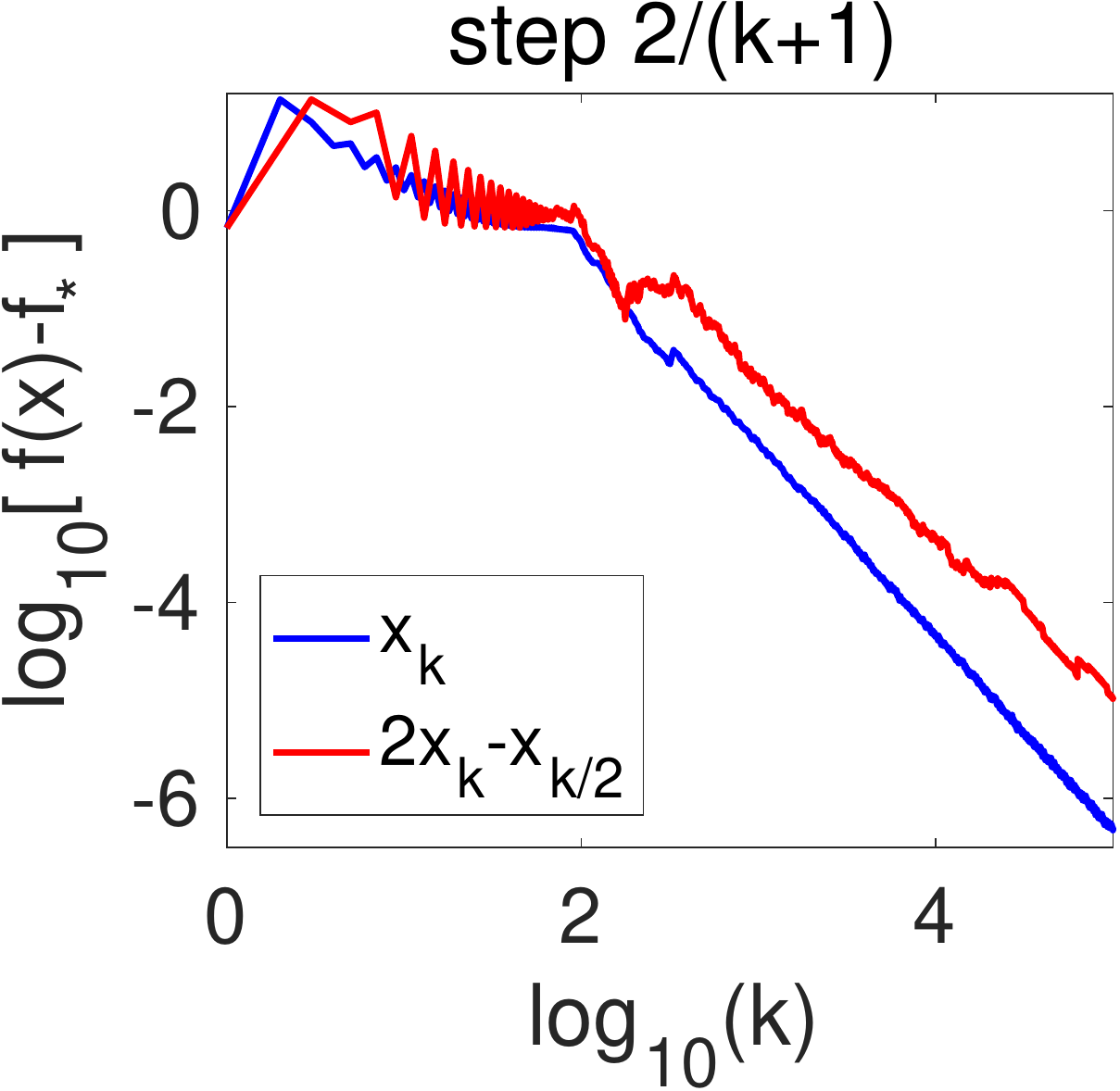}

\vspace*{-.2cm}

\caption{Frank-Wolfe for the ``constrained logistic Lasso'', that is, $
\min_{x \in \rb^d} f(x) \mbox{ such that} \| x\|_1 \leqslant c$, with $f(x) = \frac{1}{n} \sum_{i=1}^n \log( 1+ \exp( - b_i x^\top a_i))$. We consider   $n=400$ observations in dimension $d=400$, sampled from a standard normal distribution, and with a constraint on the $\ell_1$-norm. Left: step size $1/k$ with slopes~$-1$ (blue) and $-2$ (red). Right: step size $2/(k+1)$, with slope approximately~$-2$ for the two curves. \label{fig:FWlasso}  }
\end{center}
\end{figure}

\paragraph{Asymptotic expansion.} 
In order to provide the proposition below (see  Appendix~\ref{app:FW} for a proof) that  characterizes the zig-zagging phenomenon, we assume regularity properties similar to \mysec{tail} and that the unique minimizer is ``in the middle of a face'' of $\mathcal{K}$, which is often referred to as \emph{constraint qualification} in optimization~\citep{nocedal2006numerical}.  

\begin{proposition}
\label{prop:FW}
Assume   $f$  convex, three-times differentiable with bounded third-order derivatives in the polytope $\mathcal{K}$, and with a unique minimizer $x_\ast \in \rb^d$ such that $f''(x_\ast) $ is positive definite. Moreover, we assume $x_\ast$ is strictly in a $(m- 1)$-dimensional face of $\mathcal{K}$ which is the convex hull $\mathcal{K}_\ast$ of extreme points $y_1,\dots,y_m \in \rb^d$, and for which $\min_{y \in \mathcal{K}} f'(x_\ast)^\top y$ is attained only by elements of $\mathcal{K}_\ast$. Then:
\BIT
\item For $\rho_k = 1/k$,   $x_k = x_\ast + \frac{1}{k} \Delta_1 + O(1/k^2) $. This implies $f(x_k) - f(x_\ast) = \frac{1}{k} \Delta_1^\top f'(x_\ast) + O(1/k^2)$ and $
f(2x_k - x_{k/2}) - f(x_\ast) = O(1/k^2)$.
\item For $\rho_k = 2/(k+1)$,   $x_k = x_\ast + \frac{1}{k(k+1)} \Delta_2 + O(1/k^2)$. This implies $f(x_k) - f(x_\ast) = O(1/k^2)$ and $
f(2x_k - x_{k/2}) - f(x_\ast) = O(1/k^2)$.
\EIT
The two vectors $\Delta_1$ and $\Delta_2$, are orthogonal (for the dot-product defined by $f''(x_\ast)$) to the span of all $y_i-x_\ast$, $i=1,\dots,m$
\end{proposition}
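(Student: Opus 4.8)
The plan is to localize near $x_\ast$ and reduce the Frank-Wolfe recursion to a one-step linear recursion in the directions transverse to the optimal face. Write $H:=f''(x_\ast)$, $\langle u,v\rangle_H:=u^\top Hv$, $\|u\|_H:=\langle u,u\rangle_H^{1/2}$, and $T:=\mathrm{span}\{y_i-x_\ast:i=1,\dots,m\}$. First I would invoke the classical analysis of open-loop Frank-Wolfe \citep{jaggi2013revisiting} to get $f(x_k)\to f(x_\ast)$; uniqueness of the minimizer on the compact $\mathcal K$ then forces $x_k\to x_\ast$ and $f'(x_{k-1})\to f'(x_\ast)$. Since $\mathcal K$ is a polytope and $\min_{y\in\mathcal K}f'(x_\ast)^\top y$ is attained only on $\mathcal K_\ast$, there is a gap $\delta>0$ between the (common) value at $y_1,\dots,y_m$ and the value at every other vertex, so by continuity of $f'$ the linear minimization oracle returns one of $y_1,\dots,y_m$ for all $k\ge k_0$. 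Expanding $f'(x_{k-1})-f'(x_\ast)=H(x_{k-1}-x_\ast)+O(\|x_{k-1}-x_\ast\|^2)$ and using that $f'(x_\ast)^\top y_i$ is independent of $i$, the returned vertex is $\bar x_k=y_{j(k)}$ with $j(k)\in\argmin_j\langle x_{k-1}-x_\ast,y_j-x_\ast\rangle_H$ up to an $O(\|x_{k-1}-x_\ast\|^2)$ perturbation. I would also record that first-order optimality of $x_\ast$ over $\mathcal K$ together with $x_\ast\in\mathrm{relint}(\mathcal K_\ast)$ forces $f'(x_\ast)^\top w=0$ for every $w\in T$.

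For $k\ge k_0$, decompose $x_k-x_\ast=u_k+v_k$ $H$-orthogonally, with $u_k\in T$ and $\langle v_k,w\rangle_H=0$ for all $w\in T$. Applying the two linear projections to $x_k-x_\ast=(1-\rho_k)(x_{k-1}-x_\ast)+\rho_k(\bar x_k-x_\ast)$ and using $\bar x_k-x_\ast\in T$ gives the exact recursions $v_k=(1-\rho_k)v_{k-1}$ and $u_k=(1-\rho_k)u_{k-1}+\rho_k(\bar x_k-x_\ast)$. Telescoping the first one gives $v_k=\Delta_1/k$ when $\rho_k=1/k$ and $v_k=\Delta_2/(k(k+1))$ when $\rho_k=2/(k+1)$, with $\Delta_1,\Delta_2$ in the $H$-orthogonal complement of $T$ --- which is the stated orthogonality. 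For the tangential part, set $\phi_k:=\tfrac12\|u_k\|_H^2$; since $0$ is in the relative interior of $\mathcal K_\ast-x_\ast$, there is $c_0>0$ with $\min_j\langle u,y_j-x_\ast\rangle_H\le -c_0\|u\|_H$ for all $u\in T$, and substituting this together with $R:=\max_j\|y_j-x_\ast\|_H$ into the $u_k$-recursion yields $\phi_k\le(1-\rho_k)^2\phi_{k-1}-\rho_k(1-\rho_k)\bigl(c_0\sqrt{2\phi_{k-1}}-O(1/k^2)\bigr)+\tfrac12\rho_k^2R^2$, the $O(1/k^2)$ absorbing the oracle perturbation. A routine induction then gives $\phi_k=O(1/k^2)$, i.e.\ $\|u_k\|_H=O(1/k)$, for both step-size rules.

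Now a third-order Taylor expansion at $x_\ast$ gives $f(x_k)-f(x_\ast)=f'(x_\ast)^\top(x_k-x_\ast)+\tfrac12\|x_k-x_\ast\|_H^2+O(\|x_k-x_\ast\|^3)$, and since $f'(x_\ast)\perp T$ and the cross term in $\|u_k+v_k\|_H^2$ vanishes, this equals $f'(x_\ast)^\top v_k+\tfrac12\|u_k\|_H^2+\tfrac12\|v_k\|_H^2+O(1/k^3)$: for $\rho_k=1/k$ it is $\tfrac1k\Delta_1^\top f'(x_\ast)+O(1/k^2)$, and for $\rho_k=2/(k+1)$ every term is $O(1/k^2)$. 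For the extrapolated point, with $k$ even and large, $2x_k-x_{k/2}-x_\ast=(2u_k-u_{k/2})+(2v_k-v_{k/2})$, where the first piece is $O(1/k)$ and lies in $T$, while $2v_k-v_{k/2}=0$ exactly for $\rho_k=1/k$ and $2v_k-v_{k/2}=-2\Delta_2/((k+1)(k+2))=O(1/k^2)$ for $\rho_k=2/(k+1)$; feeding this into the same Taylor expansion (again using $f'(x_\ast)\perp T$) gives $f(2x_k-x_{k/2})-f(x_\ast)=O(1/k^2)$ in both cases.

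The delicate point --- and where I expect to spend most care --- is the tangential component $u_k$: the Lyapunov estimate only delivers $\|u_k\|_H=O(1/k)$, and in fact $u_k$ need not converge in direction, since the oracle keeps cycling among $y_1,\dots,y_m$ so as to keep the barycentric discrepancy bounded but oscillating; thus the displayed iterate expansions $x_k=x_\ast+\tfrac1k\Delta_1+O(1/k^2)$ and $x_k=x_\ast+\tfrac1{k(k+1)}\Delta_2+O(1/k^2)$ are to be understood in the directions $H$-orthogonal to the face. What rescues the proposition is exactly $f'(x_\ast)\perp T$: the oscillating $O(1/k)$ tangential error then enters the objective only at order $O(1/k^2)$, i.e.\ at the very order at which Richardson extrapolation is anyway limited by the transverse terms --- so the objective-value and extrapolation conclusions are unaffected. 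Getting this interplay of the two error scales right in the Taylor expansion is the crux; the rest is bookkeeping.
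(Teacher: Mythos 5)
Your proof follows the same route as the paper's: normalize by $H=f''(x_\ast)$ (the paper does this via affine invariance of Frank--Wolfe, reducing to $f''(x_\ast)=\idm$), split $x_k-x_\ast$ into its component in $T=\mathrm{span}\{y_i-x_\ast\}$ and the $H$-orthogonal complement, telescope the exact linear recursion for the transverse part, and control the tangential part by a Lyapunov argument using the ``inner ball'' quantification of the constraint qualification. Your closing paragraph --- that the tangential error is genuinely $\Theta(1/k)$ and oscillating, so the stated iterate expansions can only hold in the transverse direction, and that the function-value claims are nevertheless saved by $f'(x_\ast)\perp T$ --- is correct and in fact sharpens a point the paper's ``putting things together'' step glosses over: the paper writes $x_k=x_\ast+\frac{k_0}{k}(\idm-\Pi)x_{k_0}+O(1/k^2)$ even though $\Pi(x_k-x_\ast)=O(1/k)$, and it is exactly the orthogonality $f'(x_\ast)\perp T$ (plus the $O(\|x_k-x_\ast\|^2)$ second-order term) that makes the function-value consequences survive.

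The one genuine gap is your treatment of the oracle perturbation for $\rho_k=1/k$. At the time of the Lyapunov step you only know the a priori open-loop Frank--Wolfe rate $f(x_k)-f(x_\ast)=O(\log k/k)$ (the $1/k$ step gives the extra logarithm), hence $\|x_{k-1}-x_\ast\|^2=O(\log k/k)$, so the perturbation entering the $\phi_k$-recursion is $O(\log k/k^2)$ rather than the $O(1/k^2)$ you assert. Feeding $O(\log k/k^2)$ into the induction only yields $\phi_k=O((\log k)^2/k^2)$, not $O(1/k^2)$. The paper closes this with a one-round bootstrap: the first pass already improves $f(x_k)-f(x_\ast)$ to $O(1/k)$ without a logarithm, so $\|x_{k-1}-x_\ast\|^2=O(1/k)$ on the second pass, the perturbation becomes $O(1/k^2)$, and then $\phi_k=O(1/k^2)$ follows. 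You need that extra pass; for $\rho_k=2/(k+1)$ the a priori rate carries no $\log$, so your direct argument is fine. Finally, the ``routine induction'' is not entirely routine --- the paper isolates it as an auxiliary lemma on sequences of the form $u_k\leqslant u_{k-1}-A\sqrt{u_{k-1}}+Bv_{k-1}$, and some such statement is needed to handle the $\rho_k$-scaling cleanly.
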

We now discuss the consequences of the proposition above.

\begin{figure}
\begin{center}
\hspace*{-.2cm}
\includegraphics[scale=.41]{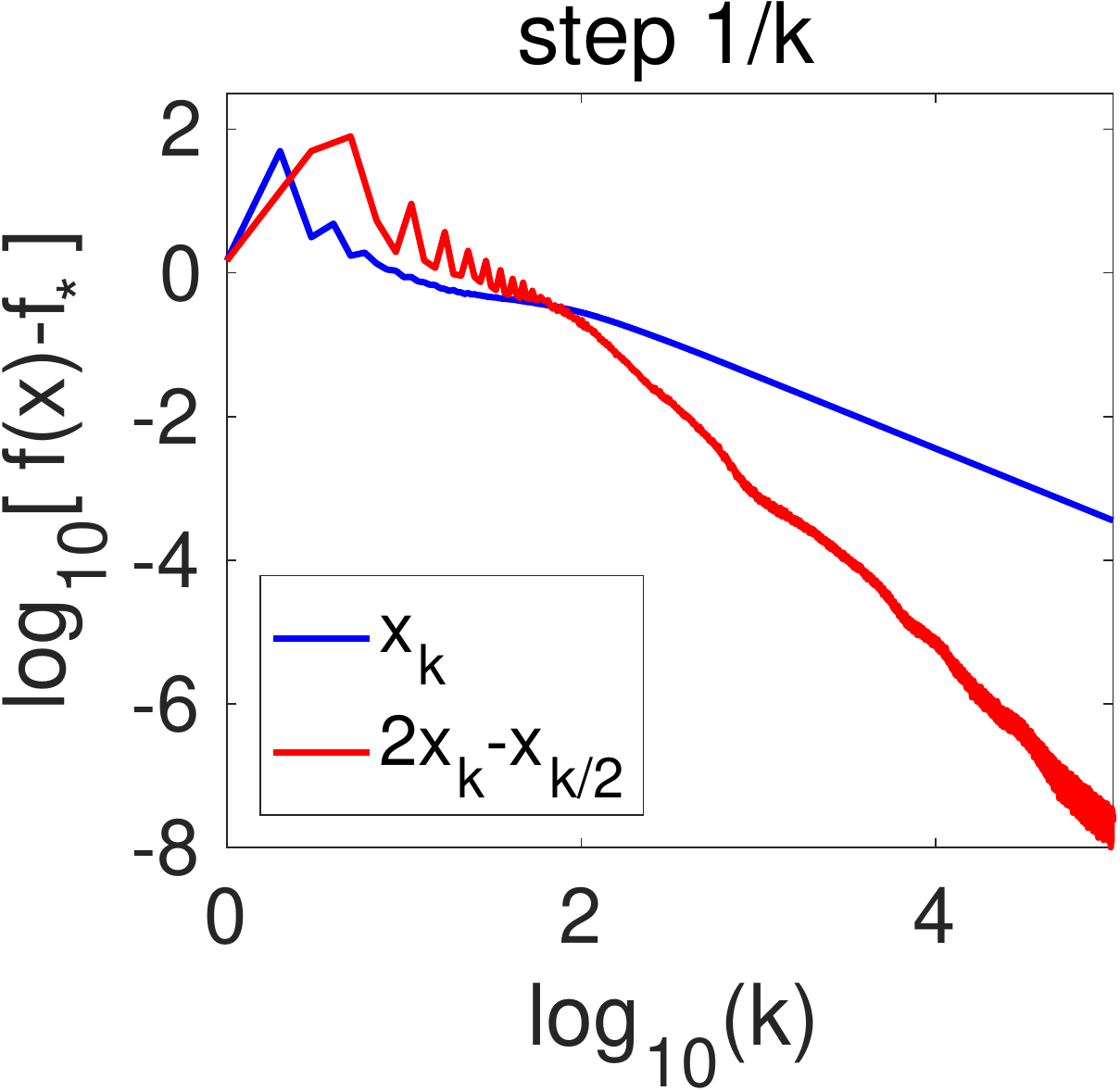}  \hspace*{.4cm}
\includegraphics[scale=.41]{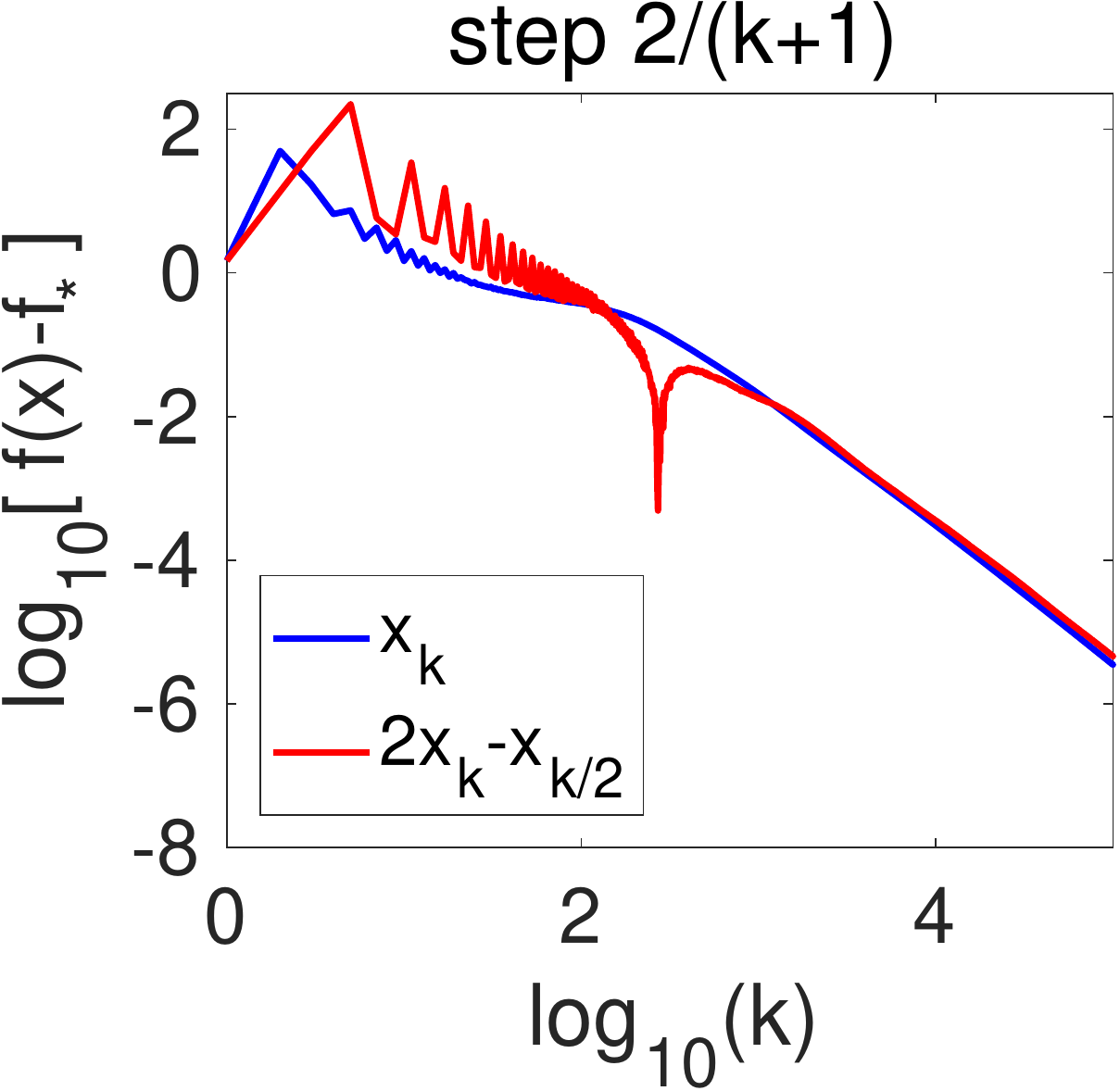}

\vspace*{-.2cm}

\caption{Frank-Wolfe for the dual of robust regression.  We consider the dual of absolute loss regression with $n=400$ observations in dimension $d=200$, sampled from a standard normal distribution, with a squared Euclidean norm penalty. The primal  problem is $\inf_{y \in \rb^d} \frac{1}{n} \| b  - Ax \|_1 + \frac{\lambda}{2} \| y\|_2^2$ while the dual problem is
$\sup_{\| x \|_\infty \leqslant 1}  -f(x)$, with $f(x) = -  \frac{1}{n} x^\top b  + \frac{1}{2 n^2 \lambda} x^\top AA^\top x$. Left: step size $1/k$ with slopes~$-1$ (blue) and $-2$ (red). Right: step size $2/(k+1)$, with slope~$-2$ for the two curves. 
\label{fig:robust}  }
\end{center}
\end{figure}

\paragraph{Step-size $\rho_k = 1/k$.}  Although it leads to a worse performance than the step-size $\rho_k = 2/(k+1)$ both in theory (extra logarithmic term) and in practice, we consider it as (1) this is the ``historical'' step-size~\citep{dunn1978conditional} with an interesting behavior in our set-up, and (2) the corresponding dual algorithm is the subgradient method with plain averaging~\citep[see, e.g.,][]{bach2015duality}, which is sometimes preferred in online learning~\citep{hazan2006logarithmic}.

As shown in Prop.~\ref{prop:FW}, Richardson extrapolation allows to go from an $O(1/k)$ to an $O(1/k^2)$ convergence rate. In the left plots of \myfig{FWlasso} and \myfig{robust}, we can observe the benefits of Richardson extrapolation on two optimization problems with the step-size $\rho_k = 1/k$. Note that: (a) asymptotically, there is provably no extra logarithmic factor like we have for the non-asymptotic convergence rate, and (b) that the Richardson extrapolated iterate may not be within $\mathcal{K}$, but is $O(1/k)$ away from it (in our simulations, we simply make the iterate feasible by rescaling).

\paragraph{Step-size $\rho_k = 2/(k+1)$.}   As shown in Prop.~\ref{prop:FW}, we already get an performance of $O(1/k^2)$ without extrapolation (which is a new asymptotic result for Frank-Wolfe algorithm on polytopes), and Richardson extrapolation does not lead to any acceleration.  In the right plots of \myfig{FWlasso} and \myfig{robust}, we indeed see  no benefits (but no strong degradation).
 
Note that here (and for both step-sizes), higher order Richardson would not lead to further cancellation as within the span of the supporting face, we have an oscillating behavior similar to the left plot of \myfig{oscillating}. Moreover, although we do not have  a proof, the closed loop algorithm exhibits the same behavior as the step $\rho_k = 1/k$, both with and without extrapolation, which is consistent with the analysis of~\citet{canon1968tight}. It would also be interesting to consider the benefits for Richardson extrapolation for strongly-convex sets~\citep{garber2015faster}.

\section{Extrapolation on the Regularization Parameter}
\label{sec:regularization}

In this section, we explore the application of Richardson extrapolation to regularization methods. In a nutshell, regularization allows to make an estimation problem more stable (less subject to variations for statistical problems) or the algorithm faster (for optimization problems). However, regularization adds a bias that needs to be removed. In this section, we apply Richardson extrapolation to the regularization parameter to reduce this bias.
We consider two applications where we can provably show some benefits: (a) smoothing for non-smooth optimization in \mysec{smoothing}, and (b) ridge regression in \mysec{ridge}.

Other applications include the classical use within integration methods, where the technique is often called Richardson-Romberg extrapolation~\citep{gautschi1997numerical}, and for bias removal in constant-step-size stochastic gradient descent for sampling~\citep{durmus2016stochastic} and optimization~\citep{dieuleveut2017bridging}.

\subsection{Smoothing non-smooth problems}

\label{sec:smoothing}

We consider the minimization of a convex function of the form $f(x) = h(x) + g(x)$, where $h$ is smooth and $g$ is non-smooth. These optimization problems are ubiquitous in machine learning and signal processing, where the lack of smoothness can come from (a) non-smooth losses such as max-margin losses used in support vector machines and more generally structured output classification~\citep{taskar2005learning,tsochantaridis2005large}, and (b) sparsity-inducing regularizers~\citep[see, e.g.,][and references therein]{bach2012optimization}. While many algorithms can be used to deal with this non-smoothness, we consider a classical smoothing technique below.

\paragraph{Nesterov smoothing.}
In this section, we consider the smoothing approach of~\citet{nesterov2005smooth} where the non-smooth term is ``smoothed'' into $g_\lambda$, where $\lambda$ is a regularization parameter, and accelerated gradient descent is used to minimize $h+g_\lambda$. 

A typical way of smoothing the function $g$ is to add $\lambda$ times a strongly convex regularizer to the Fenchel conjugate of~$g$ (see an example below); as shown by \citet{nesterov2005smooth}, this leads to a function $g_\lambda$ which has a smoothness constant (defined as the maximum of the largest eigenvalues of all Hessians) proportional to $1/\lambda$, with a uniform error of $\lambda$ between $g$ and $g_\lambda$. Given that accelerated gradient descent leads to an iterate with excess function values proportional to $1/(\lambda k^2)$ after $k$ iterations, with the choice of $\lambda \propto 1/k$, this leads to an excess in function values proportional to $1/k$, which improves on the subgradient method which converges in $O(1/\sqrt{k})$.

\paragraph{Richardson extrapolation.} If we denote by $x_\lambda$ the minimizer of $h+g_\lambda$ and $x_\ast$ the global minimizer of $f=h+g$, if we can show that $x_\lambda = x_\ast + \lambda \Delta + O(\lambda^2)$, then $x^{(1)}_\lambda = 2 x_\lambda - x_{2\lambda} = O(\lambda^2)$ and we can expand $f(x_\lambda^{(1)})  = f(x_\ast)  + O(\lambda^2)$, which is better than the $O(\lambda)$ approximation without extrapolation. 

Then, with $\lambda \propto k^{-2/3}$, to balance the two terms $1/(\lambda k^2)$ and $\lambda^2$,  we get an overall convergence rate for the non-smooth problem of $k^{-4/3}$. We now make this formal for the special (but still quite generic) case of polyhedral functions~$g$, and also consider $m$-step Richardson extrapolation, which leads to a convergence rate arbitrarily close to~$O(1/k^2)$.

\paragraph{Polyhedral functions.} We consider
a polyhedral function of the form
$$g(x) = \max_{i \in \{1,\dots,m\}} a_i^\top x - b_i =   \max_{i \in \{1,\dots,m\}} (A x - b)_i,$$
where $A \in \rb^{m \times d}$ and $b \in \rb^m$. This form includes traditional regularizers such as the $\ell_1$-norm, the $\ell_\infty$-norm, grouped $\ell_1$-$\ell_\infty$-norms~\citep{negahban2008joint}, or more general sparsity-inducing norms~\citep{bach2010structured}.

We consider the smoothing of $g$ as:
$$
g_\lambda(x) = \max_{ \eta \in \Delta_m} \eta^\top ( A x - b) - \lambda \varphi(\eta),
$$
for some strongly convex function $\varphi$, typically, the negative entropy $\sum_{i=1}^m \big\{ \eta_i \log \eta_i - \eta_i \big\}$, or $\frac{1}{2} \| \eta\|_2^2$. 
For our asymptotic expansion, we also need a form a constraint qualification (see proof in Appendix~\ref{app:smoothing}).

\begin{proposition}
\label{prop:nesterov}
Assume $h$ convex, three-times differentiable with bounded third-order derivatives $g$ convex,  and a unique minimizer $x_\ast \in \rb^d$ of $h+g$ such that $h''(x_\ast) $ is positive definite. Assume there exists $\eta_\ast \in \rb^m$ in the simplex $\Delta_m$ (defined as the vectors  in $\rb^m$ with non-negative components summing to one), such that, for the support $I \subset \{1,\dots,m\}$ of $\eta_\ast $ (that is, the set of non-zeros), 
$$
h'(x_\ast) + A^\top \eta_\ast  = 0,
$$
and 
$$\max_{i \in \{1,\dots,m\}} (A x_\ast - b)_i \mbox{ is only attained  for all } i \in I.$$
Assume moreover the submatrix $A_I$ obtained by taking the  the rows of $A$ indexed by $I$ has full rank. 
We denote by $x_\lambda$ a minimizer of $h(x) + g_\lambda(x)$, and $\eta_\lambda$ the corresponding dual variable. Then:
$$
x_\lambda  = x_\ast + \lambda \Delta + O(\lambda^2),
$$
with $\Delta =    h''(x_\ast)^{-1} A_I^\top  \big[ A_I h''(x_\ast)^{-1} A_I^\top  \big]^{-1} (\eta_\ast)_I $ for the quadratic penalty, and a similar expression for the entropic penalty.
\end{proposition}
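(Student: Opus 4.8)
The idea is to view $(x_\lambda,\eta_\lambda)$ as the solution of a smoothly perturbed system of optimality conditions, to identify which of those conditions are active near $\lambda=0$, and then to differentiate via the implicit function theorem. Since $\varphi$ is strongly convex, the maximiser $\eta_\lambda$ defining $g_\lambda(x_\lambda)$ is unique and depends continuously on $x_\lambda$, $g_\lambda$ is differentiable with $g_\lambda'(x)=A^\top\eta_\lambda(x)$ (Danskin), and the optimality condition for $x_\lambda$ reads
\[
h'(x_\lambda)+A^\top\eta_\lambda=0 , \qquad \eta_\lambda=\arg\max_{\eta\in\Delta_m}\ \eta^\top(A x_\lambda-b)-\lambda\varphi(\eta) .
\]
First I would record well-posedness: $\sup_x|g_\lambda(x)-g(x)|\leqslant \lambda\sup_{\Delta_m}|\varphi|<\infty$, so $h+g_\lambda$ converges uniformly to $h+g$, whose unique minimiser is $x_\ast$; since $h''(x_\ast)\succ 0$ makes $h+g_\lambda$ locally strongly convex near $x_\ast$, for $\lambda$ small the minimiser $x_\lambda$ near $x_\ast$ is unique and $x_\lambda\to x_\ast$.

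The second step is support identification. Because $\max_i(Ax_\ast-b)_i$ is attained only on $I$, there is $\delta>0$ with $(Ax_\ast-b)_i\leqslant\max_j(Ax_\ast-b)_j-\delta$ for $i\notin I$, and by $x_\lambda\to x_\ast$ the same holds with $\delta/2$ at $x_\lambda$ for $\lambda$ small. For the quadratic penalty $\eta_\lambda$ is the Euclidean projection of $(Ax_\lambda-b)/\lambda$ onto $\Delta_m$, and a threshold computation shows this gap forces $(\eta_\lambda)_i=0$ exactly for $i\notin I$ once $\lambda$ is small enough; for the entropic penalty $(\eta_\lambda)_i\propto e^{(Ax_\lambda-b)_i/\lambda}$, so the total mass on $i\notin I$ is $O(e^{-\delta/(2\lambda)})$, negligible at every polynomial order, and one works with the vector renormalised on $I$. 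Passing to the limit in $h'(x_\lambda)+A^\top\eta_\lambda=0$ (subsequential limits of $\eta_\lambda$ being supported on $I$) and using that full row rank of $A_I$ makes the KKT multiplier unique gives $\eta_\lambda\to\eta_\ast$; since $(\eta_\ast)_i>0$ on $I$, $(\eta_\lambda)_I$ lies in the relative interior of the $I$-face for $\lambda$ small. Hence, up to terms negligible at every order, the conditions reduce to the smooth finite-dimensional system in $(x,\eta_I,\mu)\in\rb^d\times\rb^{|I|}\times\rb$,
\[
h'(x)+A_I^\top\eta_I=0, \qquad A_I x-b_I-\lambda\varphi_I'(\eta_I)-\mu\mathbf 1=0, \qquad \mathbf 1^\top\eta_I=1,
\]
where $\varphi_I$ is the restriction of $\varphi$ to the $I$-coordinates, solved at $\lambda=0$ by $(x_\ast,(\eta_\ast)_I,\mu_0)$ with $\mu_0=\max_j(Ax_\ast-b)_j$; the third equation holds at $\lambda=0$ exactly because $A_I x_\ast-b_I$ is a constant multiple of $\mathbf 1$, which is the ``max attained on all of $I$'' hypothesis.

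The third step is the implicit function theorem. The Jacobian of the left-hand side in $(x,\eta_I,\mu)$ at $\lambda=0$ is
\[
J=\begin{pmatrix} h''(x_\ast) & A_I^\top & 0\\ A_I & 0 & -\mathbf 1\\ 0 & \mathbf 1^\top & 0 \end{pmatrix},
\]
and a short kernel computation shows $J$ is invertible: $h''(x_\ast)\succ 0$ together with full row rank of $A_I$ makes $M:=A_I h''(x_\ast)^{-1}A_I^\top$ positive definite, and then any $u=(p,q,s)$ with $Ju=0$ reduces to $s\,\mathbf 1^\top M^{-1}\mathbf 1=0$, whence $s=0$ and $u=0$ --- this is where all the hypotheses of the proposition enter. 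The implicit function theorem then produces a branch $\lambda\mapsto(x_\lambda,\eta_{\lambda,I},\mu_\lambda)$ through that point which is as smooth as the data; since $h$ is three-times differentiable with bounded third derivatives, the branch is $C^2$ near $\lambda=0$, so $x_\lambda=x_\ast+\lambda\Delta+O(\lambda^2)$.

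Finally, differentiating the reduced system at $\lambda=0$ identifies $\Delta$. Writing $\dot x=\Delta$, $\dot\eta_I=\rho$ (so $\mathbf 1^\top\rho=0$) and $\dot\mu=\nu$, the first two equations give $h''(x_\ast)\Delta+A_I^\top\rho=0$ and $A_I\Delta=\varphi_I'((\eta_\ast)_I)+\nu\mathbf 1$; eliminating $\Delta=-h''(x_\ast)^{-1}A_I^\top\rho$ gives $-M\rho=\varphi_I'((\eta_\ast)_I)+\nu\mathbf 1$, and imposing $\mathbf 1^\top\rho=0$ fixes $\nu$ and then $\rho$, yielding the closed form $\Delta=h''(x_\ast)^{-1}A_I^\top\big[A_I h''(x_\ast)^{-1}A_I^\top\big]^{-1}(\eta_\ast)_I$ (the simplex normalisation contributes an extra term proportional to $\mathbf 1$, immaterial for the extrapolation) for the quadratic penalty, where $\varphi_I'=\mathrm{id}$, and the analogous expression with $\varphi_I'((\eta_\ast)_I)=\log(\eta_\ast)_I$ componentwise for the negative entropy. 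I expect the main obstacle to be the implicit-function step: checking invertibility of $J$, and above all making support identification rigorous for the entropic smoothing, where $\eta_\lambda$ never sits exactly on the $I$-face and one must control the exponentially small off-face mass so it does not disturb the $O(\lambda^2)$ expansion; the remaining first-order perturbation algebra is routine.
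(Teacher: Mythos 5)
Your route is correct and genuinely different in organization from the paper's. The paper works directly with the dual stationarity condition on the $I$-block and Taylor-expands $\partial h^\ast$ around $-A_I^\top(\eta_\ast)_I$ (using $\partial^2 h^\ast(-A^\top\eta_\ast)=h''(x_\ast)^{-1}$ via the inverse function theorem), bootstrapping from the preliminary bound $\eta_\lambda-\eta_\ast=O(\lambda^{1/2})$ to the first-order expansion. You instead assemble the perturbed primal-dual KKT system on the active face and apply the implicit function theorem to one smooth map in $(x,\eta_I,\mu,\lambda)$. This is arguably the cleaner route: it treats the quadratic and entropic penalties uniformly through $\varphi_I'$, makes the invertibility hypothesis concrete as nondegeneracy of a single Jacobian built from $h''(x_\ast)\succ 0$ and full row rank of $A_I$, and turns the higher-order statement (Prop.~4) into a formal corollary of the same IFT, rather than requiring a second argument. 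Your preliminary steps (uniform convergence $g_\lambda\to g$, local strong convexity, support identification via the gap $\delta$ for the quadratic penalty and exponential control of off-face mass for the entropic one) match the paper's Appendix~D reasoning.

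One point in your write-up deserves to be made louder rather than parenthetical, because it actually corrects a gap in the paper's derivation. The paper states the $I$-block stationarity as $0=-\lambda(\eta_\lambda)_I+A_I\partial h^\ast(-A_I^\top(\eta_\lambda)_I)-b_I$ and uses $b_I=A_I\partial h^\ast(-A_I^\top(\eta_\ast)_I)$; but since $\partial h^\ast(-A^\top\eta_\ast)=x_\ast$, this would force $A_Ix_\ast-b_I=0$, whereas the constraint qualification only gives $A_Ix_\ast-b_I=\mu_0\mathbf 1$ with $\mu_0=\max_j(Ax_\ast-b)_j$ generally nonzero. The paper has silently dropped the multiplier for $\mathbf 1^\top\eta_I=1$. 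Your system carries $\mu$ and enforces $\mathbf 1^\top\rho=0$, which is the correct treatment, and solving it gives $\Delta=h''(x_\ast)^{-1}A_I^\top M^{-1}\big[(\eta_\ast)_I+\nu\mathbf 1\big]$ with $M=A_Ih''(x_\ast)^{-1}A_I^\top$ and $\nu=-\big(\mathbf 1^\top M^{-1}(\eta_\ast)_I\big)/\big(\mathbf 1^\top M^{-1}\mathbf 1\big)$. The $\nu$-term does not vanish in general: for $d=m=2$, $A=\idm$, $b=0$, $h(x)=\tfrac12\|x-c\|_2^2$, the displayed formula of the proposition gives $\Delta=\eta_\ast$, while a direct calculation gives $\Delta=\eta_\ast-\tfrac12\mathbf 1$. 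So the closed form stated in the proposition (and echoed in your closing line) should include this correction, and your caveat that the extra term is ``proportional to $\mathbf 1$'' is only true in coordinates where $h''(x_\ast)^{-1}A_I^\top M^{-1}\mathbf 1$ happens to be $\mathbf 1$. You are right that none of this affects the extrapolation conclusion --- Richardson cancels whatever $\lambda\Delta$ is --- but it is worth flagging rather than waving away.
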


The proposition above implies that (see detailed proof in Appendix~\ref{app:smoothing} for details) the smoothing technique asymptotically adds a bias of order $\lambda$
$$
f(x_\lambda)  = f(x_\ast)  + O(\lambda),
$$
where we recover the usual upper-bound in $O(\lambda)$, confirming the result from~\citet{nesterov2005smooth}. The key other consequence is that 
$$
f(x_\lambda^{(1)}) = f(2 x_\lambda -x_{2\lambda})  = f(x_\ast)  + O(\lambda^2) ,
$$
which shows the benefits of Richardson extrapolation.

% Assuming $x_\lambda$ and $x_\ast$ are close, we can perform Taylor expansions as:
%$$
%0 = f'(x_\lambda)+g_\lambda'(x_\lambda) = f'(x_\ast) + f''(x_\ast) ( x_\lambda - x_\ast)
%+ g_\lambda'(x_\ast)  + g_\lambda''(x_\lambda) ( x_\lambda - x_\ast) + O( \varepsilon^{-2} \| x_\lambda - x_\ast \|^2 ).
%$$

\paragraph{Multiple-step Richardson extrapolation.} Given that one-step Richardson extrapolation allows to go from a bias of $O(\lambda)$ to $O(\lambda^2)$, a natural extension is to consider $m$-step Richardson extrapolation~\citep{gautschi1997numerical}, that is, a combination of $m+1$ iterates:
$$
x^{(m)}_\lambda = \sum_{i=1}^{m+1} \alpha^{(m)}_i x_{i\lambda},
$$
where the weights $\alpha^{(m)}_i$ are such that the first $m$ powers in the Taylor expansion of $x_\lambda$ cancel out. This can be done by solving the linear system based on the following equations:
\BEA
\label{eq:alpha1}
& \!\! & \sum_{i=1}^{m+1} \alpha^{(m)}_i = 1 \\
\label{eq:alpham} \forall j\in \{1,\dots,m\},  & \!\! & \sum_{i=1}^{m+1} \alpha^{(m)}_i i^{j} = 0.
\EEA
Using the same technique as~\citet[Lemma 3.1]{pages2007multi}, this is a Vandermonde system with a closed form solution (see proof in Appendix~\ref{app:gamma}):
\BEAS
\alpha^{(m)}_i  =  (-1)^{i-1}  { m+1 \choose i} .
\EEAS
We show in Appendix~\ref{app:smoothing} the following proposition:
\begin{proposition}
\label{prop:nesterov2}
On top of assumptions from Prop.~\ref{prop:nesterov}, assume $h$ is $(m+2)$-times differentiable with bounded derivatives. Then
$$f(x_\lambda)  =  f(x_\ast) + O(\lambda^{m+1}).$$
\end{proposition}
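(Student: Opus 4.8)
The plan is to deduce the bound from an order-$m$ asymptotic expansion of the smoothed minimizer, namely
$$
x_\lambda = x_\ast + \sum_{j=1}^m \lambda^j \Delta_j + O(\lambda^{m+1}),
$$
for suitable vectors $\Delta_1,\dots,\Delta_m\in\rb^d$ with $\Delta_1=\Delta$ as in Prop.~\ref{prop:nesterov}. Granting this, I substitute $i\lambda$ for $\lambda$ (legitimate for $\lambda$ small since $i\leqslant m+1$), multiply by $\alpha^{(m)}_i$ and sum: by \eqref{eq:alpha1} the term $x_\ast$ is preserved, and for each $j\in\{1,\dots,m\}$ the coefficient of $\lambda^j\Delta_j$ is $\sum_{i=1}^{m+1}\alpha^{(m)}_i i^j=0$ by \eqref{eq:alpham}; hence $x^{(m)}_\lambda=x_\ast+O(\lambda^{m+1})$. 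Finally $g$ is polyhedral, hence globally Lipschitz with constant $\max_i\|a_i\|_2$, and $h$ is $C^1$ near $x_\ast$, so $f=h+g$ is locally Lipschitz around $x_\ast$; therefore $|f(x^{(m)}_\lambda)-f(x_\ast)|\leqslant L_f\,\|x^{(m)}_\lambda-x_\ast\|_2=O(\lambda^{m+1})$, which is the claimed bound (applied to the $m$-step extrapolated iterate).

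It remains to upgrade the first-order expansion of Prop.~\ref{prop:nesterov} to order $m$. I would first redo the active-set reduction underlying that proof. For $\lambda$ small, $h+g_\lambda$ is strongly convex near its minimizer (since $h''(x_\ast)\succ0$ and $x_\lambda\to x_\ast$), so $x_\lambda$ is unique and, with the unique dual optimum $\eta_\lambda=\arg\max_{\eta\in\Delta_m}\eta^\top(Ax_\lambda-b)-\lambda\varphi(\eta)$ defining $\nabla g_\lambda(x_\lambda)=A^\top\eta_\lambda$, it is characterized by the KKT system of $\min_x h(x)+g_\lambda(x)$. Because $\max_i(Ax_\ast-b)_i$ is attained only on $I$, we have $\big((Ax_\lambda-b)_i-\max_j(Ax_\lambda-b)_j\big)/\lambda\to-\infty$ for $i\notin I$, which forces $(\eta_\lambda)_i=0$ there (for the quadratic penalty this is simplex-projection thresholding; for the entropic penalty the corresponding exponential weight vanishes), while $\eta_\ast$ having full support on $I$ gives $(\eta_\lambda)_i>0$ for $i\in I$. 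Hence, for $\lambda$ small, the system is equivalent, in the unknowns $(x,\eta_I,\mu)\in\rb^d\times\rb^{|I|}\times\rb$, to
$$
F(x,\eta_I,\mu,\lambda)=\Big(\, h'(x)+A_I^\top\eta_I,\ \ A_I x-b_I-\lambda\,\nabla\varphi_I(\eta_I)-\mu\mathbf{1},\ \ \mathbf{1}^\top\eta_I-1\,\Big)=0,
$$
where $\varphi_I$ is the (separable) restriction of $\varphi$ to the coordinates in $I$; the solution at $\lambda=0$ is $(x_\ast,(\eta_\ast)_I,\mu_\ast)$ with $\mu_\ast=\max_i(Ax_\ast-b)_i$.

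I would then apply the implicit function theorem to $F$ at $\lambda=0$. The Jacobian in $(x,\eta_I,\mu)$ there is the KKT matrix
$$
J=\begin{pmatrix} h''(x_\ast) & A_I^\top & 0 \\ A_I & 0 & -\mathbf{1} \\ 0 & \mathbf{1}^\top & 0 \end{pmatrix}
$$
(the term $\lambda\nabla\varphi_I$ drops out at $\lambda=0$, so the argument covers the quadratic and entropic choices simultaneously), which is invertible: eliminating $x$ via $h''(x_\ast)\succ0$ leaves $M=A_I h''(x_\ast)^{-1}A_I^\top\succ0$ (full row rank of $A_I$), and $\mathbf{1}^\top M^{-1}\mathbf{1}>0$ forces the remaining variables to vanish. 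Because $F$ is $C^{m+1}$ jointly in its arguments — and this is exactly where the hypothesis that $h$ is $(m+2)$-times differentiable with bounded derivatives enters, the other pieces of $F$ being polynomial or, since $(\eta_\ast)_I$ lies in the open positive orthant, smooth — the implicit function theorem produces a $C^{m+1}$ curve $\lambda\mapsto(x_\lambda,\eta_{\lambda,I},\mu_\lambda)$ near $0$, and Taylor's theorem, with the derivative bounds guaranteeing a uniform remainder constant, yields the desired order-$m$ expansion of $x_\lambda$.

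The main obstacle is the active-set identification: one must show rigorously that for all sufficiently small $\lambda$ the smoothed dual optimum $\eta_\lambda$ has support exactly $I$, so that the a priori only locally Lipschitz, piecewise-defined map $\lambda\mapsto x_\lambda$ in fact coincides with a single $C^{m+1}$ branch; the constraint-qualification hypotheses ($\eta_\ast$ supported exactly on $I$, the maximum attained only on $I$, and $A_I$ of full row rank) are precisely what makes this work. Everything afterwards — invertibility of $J$, the implicit function theorem, and the cancellation of the first $m$ powers by $\alpha^{(m)}_i=(-1)^{i-1}\binom{m+1}{i}$ through \eqref{eq:alpha1}--\eqref{eq:alpham} — is routine, and the passage from the vector bound to the function-value bound is immediate from local Lipschitz continuity of $f$ near $x_\ast$.
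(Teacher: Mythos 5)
Your proof is correct and follows the same high-level strategy as the paper (a higher-order Taylor expansion of $x_\lambda$ in $\lambda$ via the implicit function theorem, followed by the cancellation of the first $m$ powers by the Richardson weights and a local Lipschitz argument to pass to function values), but the system to which you apply the implicit function theorem is genuinely different. The paper applies it to the dual-only equation $0 = -\lambda(\eta_\lambda)_I + A_I\,\partial h^\ast(-A_I^\top(\eta_\lambda)_I) - b_I$ in the single unknown $(\eta_\lambda)_I$; this requires first establishing that $h^\ast$ is $(m+2)$-times differentiable near $-A^\top\eta_\ast$, which the paper gets from a second application of the implicit function theorem to the defining relation of $\partial h^\ast$. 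You instead apply the implicit function theorem once, to the coupled primal-dual KKT system $F(x,\eta_I,\mu,\lambda)=0$ with the explicit simplex multiplier $\mu$; the needed regularity of $F$ comes directly from the assumed $(m+2)$-fold differentiability of $h$, so the detour through $h^\ast$ is avoided entirely, and the invertibility of the KKT Jacobian is verified cleanly via the Schur complement $M = A_I h''(x_\ast)^{-1}A_I^\top \succ 0$ and $\mathbf{1}^\top M^{-1}\mathbf{1}>0$. Your version is therefore slightly more self-contained and also makes explicit the multiplier for the constraint $\mathbf{1}^\top\eta_I=1$, which the paper's displayed dual stationarity condition leaves implicit; the paper's version is shorter but leans on the smoothness of $h^\ast$ as a derived fact. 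One small presentational note: the proposition's statement writes $f(x_\lambda)$, but both you and the paper's appendix are (correctly) proving the bound for the extrapolated iterate $x^{(m)}_\lambda = \sum_{i=1}^{m+1}\alpha^{(m)}_i x_{i\lambda}$; it is worth flagging that reading explicitly.
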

Thus, within the smoothing technique, if we consider $\lambda \propto 1/k^{2/(m+2)}$, to balance the terms $1/(\lambda k^2)$ and $\lambda^{m+1}$, we get an error for the non-smooth problem of 
$1/k^{2(m+1)/(m+2)}$, which can get arbitrarily close to $1/k^2$ when $m$ gets large. The downsides are that (a)  the constants in front of the asymptotic equivalent may blow up (a classical problem in high-order expansions), and (b) $m$-step extrapolation requires running the algorithm $m$ times (this can be down in parallel). In our experiments below, 3-step extrapolation already brings in most of the benefits.

\begin{figure}
\begin{center}
\hspace*{-.2cm}
\includegraphics[scale=.41]{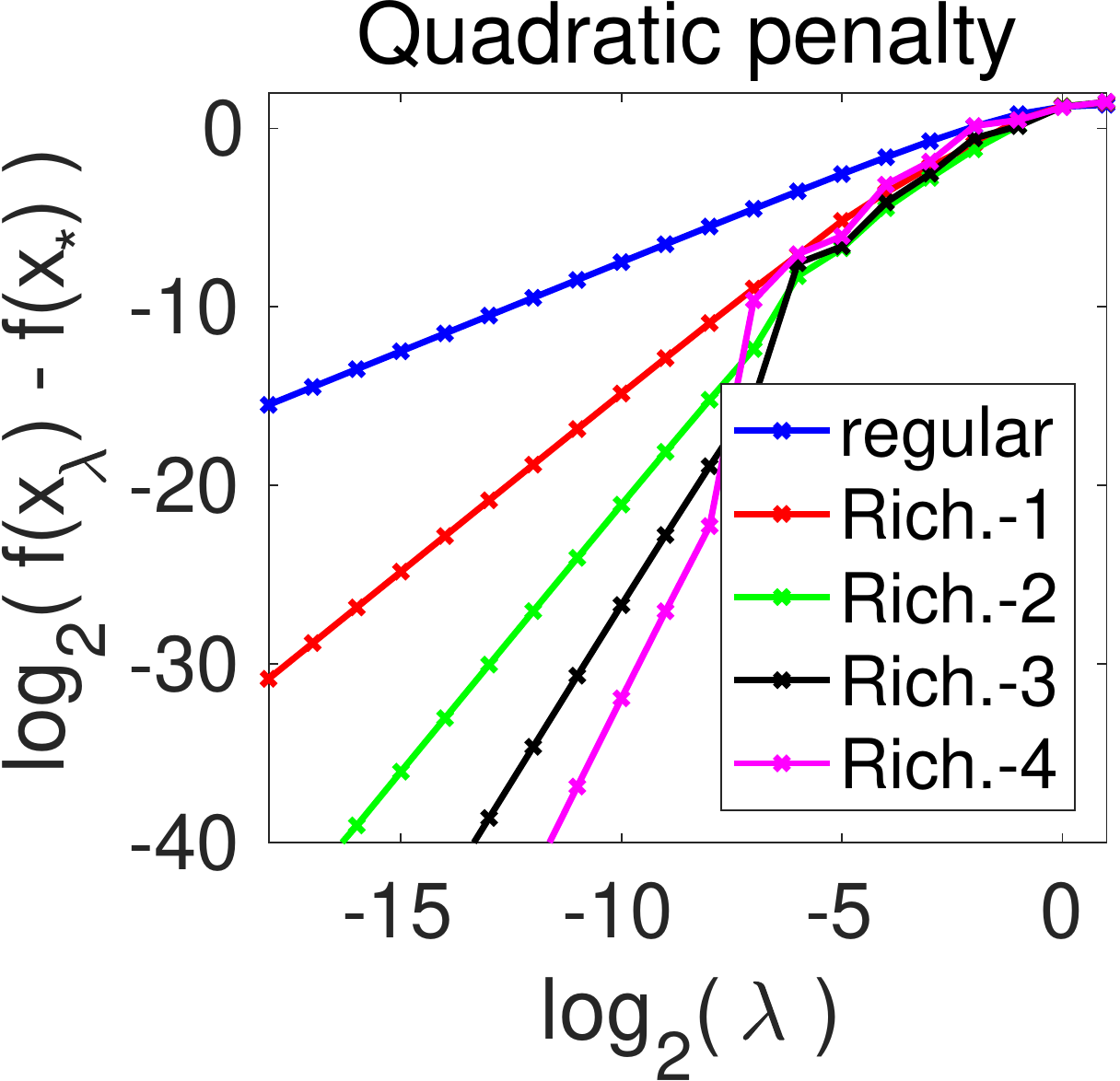} \hspace*{.4cm}
\includegraphics[scale=.41]{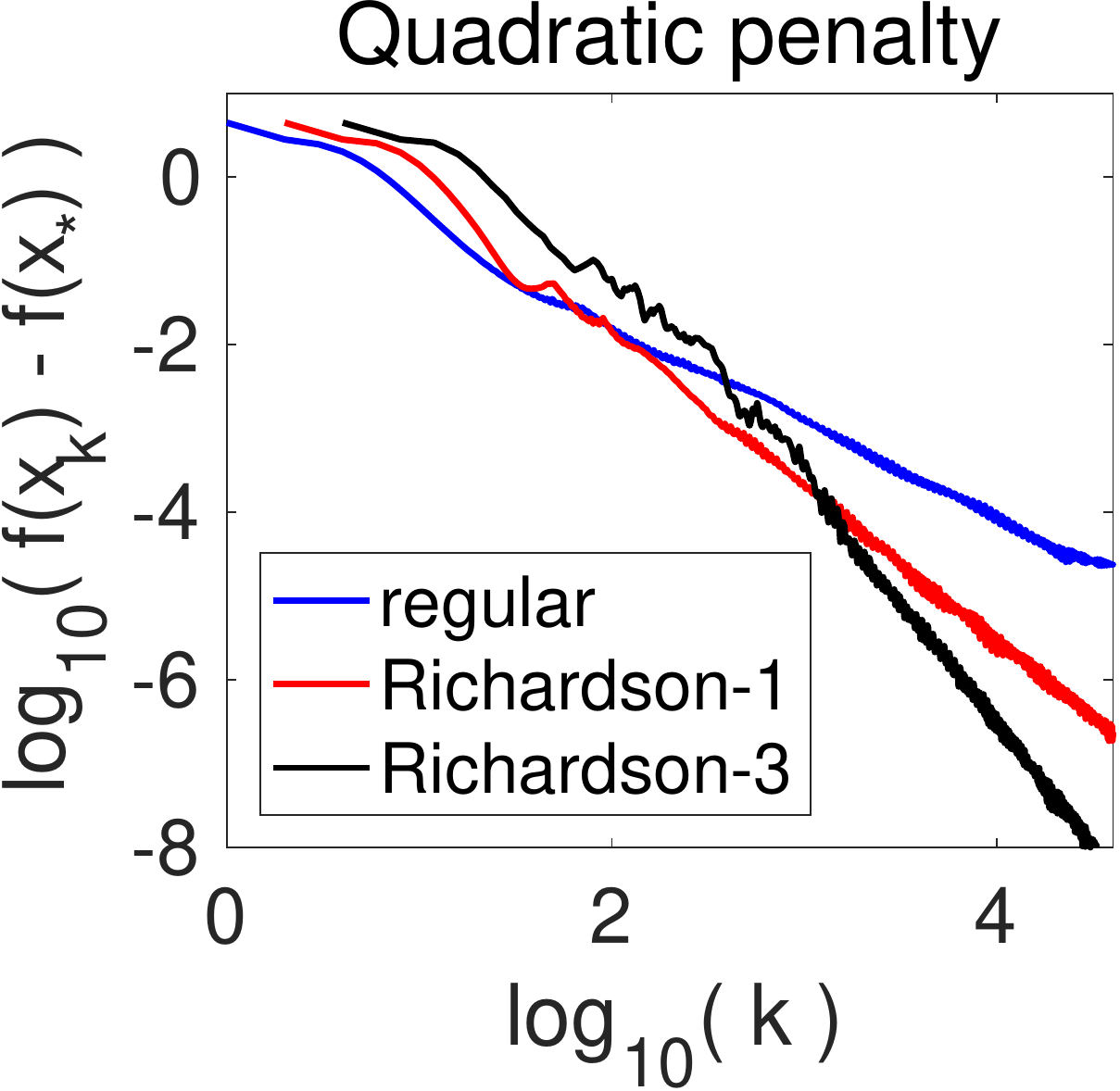}

\vspace*{-.2cm}

\caption{Richardson extrapolation for Nesterov smoothing on a penalized Lasso problem, with regularization by the quadratic penalty.  Left:  dependence of $f(x_\lambda) - f(x_\ast)$ on $\lambda$, for Richardson extrapolation of order $m$, we indeed recover a slope of $m+1$ in the log-log plot. Right: Optimization error vs.~number of iterations; where we go from a slope of -1 (blue curves) to improved slopes of $-4/3$ (red curve) and $-8/5$ (black curve). See text for details.\label{fig:smoothing_quad}}
\end{center}
\end{figure}

\paragraph{Experiments.}
We consider the penalized Lasso problem:  
$$
\min_{x \in \rb^d} \frac{1}{2n} \sum_{i=1}^n ( b_i - x^\top a_i)^2+  \lambda \| x\|_1,
$$
where $(a_i,b_i) \in \rb^d \times \rb$ for $i=1,\dots,n$, for $d=100$ and $n=100$, and with input data distributed as a standard normal vector. We use either a dual entropic penalty or a dual quadratic penalty for smoothing each component $|x_j|$ of the $\ell_1$-norm of $x$. Plots for the quadratic penalty are presented here in \myfig{smoothing_quad}, while plots for the entropic penalty are presented in \myfig{smoothing_ent} in Appendix~\ref{app:smoothing} with the same conclusions.

In the left plot of \myfig{smoothing_quad}, we illustrate the dependence of $ f(x_\lambda) - f(x_\ast) $ on $\lambda$ for Richardson extrapolation with various orders, while in the right plot of \myfig{smoothing_quad}, we study the effect of extrapolation to solve the non-smooth problem. For a series of regularization parameters equal to $2^{i}$ for $i$ between $-18$ and $1$ (sampled every $1/5$), we run accelerated gradient descent on $h+g_\lambda$ and we plot the value of $f(x)-f(x_\ast)$ for the various estimates, where for each number of iterations, we minimize over the regularization parameter. This is an oracle version of  varying $\lambda$ as a function of the number of iterations (a detailed evaluation where $\lambda$ depends on $k$ could also be carried out). In \myfig{smoothing_quad}, we plot the excess function values as a function of the number of iterations, \emph{taking into account that $m$-step Richardson extrapolation requires $m$-times more iterations}. We see that we get a strong improvement approaching $1/k^2$.

\paragraph{From non-linear programming to linear programming.}
When we use the entropic penalty, the smoothing framework is generally applicable in most non-linear programming problems \citep[see, e.g.,][]{cominetti1994stable}. It is interesting to note that typically when applying the entropic penalty, the deviation to the global optimizer is going to zero \emph{exponentially} in $-1/\lambda$ for some of the components (see a proof for our particular case in Appendix~\ref{app:smoothing}), but not for the corresponding dual problem (which is our primal problem).

Another classical instance of entropic regularization in machine learning leads to the Sinkhorn algorithm for computing optimal transport plans~\citep{cuturi2013sinkhorn}. For that problem, the entropic penalty is put directly on the original problem, and the deviation in estimating the optimal transport plan can be shown to be asymptotically exponential in $-1/\lambda$~\citep{cominetti1994asymptotic}, and thus there Richardson extrapolation is not helpful (unless one wants to estimate the Kantorovich dual potentials).

\subsection{Improving bias in ridge regression}
\label{sec:ridge}
We consider the ridge regression problem, that is, $w_\lambda$ is the unique minimizer of
$$ \min_{ w \in \rb^d} \ \frac{1}{2n} \| y - \Phi w \|_2^2 + \frac{\lambda}{2} \| w\|_2^2,$$
where $\Phi \in \rb^{n\times d}$ is a feature vector and $y \in \rb^n$   a vector of responses~\citep{friedman2001elements}. The solution may be obtained in closed form by solving the normal equations, as $w_\lambda =  ( \Phi^\top \Phi + n\lambda \idm)^{-1} \Phi^\top y$.

The regularization term $\frac{\lambda}{2} \| w\|_2^2$ is added to avoid overfitting and control the variability of $w_\lambda$ due to the randomness in the training data (the higher the $\lambda$, the more control); however, it does create a bias that goes down as $\lambda$ goes to zero. Richardson extrapolation can be used to reduce this bias. We thus consider $w^{(1)}_\lambda = 2 w_\lambda - w_{2\lambda}$ and more generally
$$
w^{(m)}_\lambda = \sum_{i=0}^m \alpha^{(m)}_i w_{i \lambda},
$$
with the same weights as defined in \eq{alpha1} and \eq{alpham}. In order to compute $w^{(m)}_\lambda$, either $m$ ridge regression problems can be solved, or a closed-form spectral formula can be used based on a single singular value decomposition of the kernel matrix (see Appendix~\ref{app:gamma} for details).

\paragraph{Theoretical analysis.}

Following~\citet{bach2013sharp}, we assume for simplicity that $\Phi$ is deterministic and that $y = z + \varepsilon$, where $\varepsilon$ has zero mean and covariance matrix $\sigma^2 \idm$. We
consider the in-sample error of $\hat{y}_\lambda=\Phi w_\lambda = K ( K + n\lambda \idm)^{-1} y = \hat{H}_\lambda y$, where $K = \Phi \Phi^\top$ is the usual kernel matrix, and $\hat{H}_\lambda$ is the smoothing matrix, which is equal to $\idm$ for very small $\lambda$ and equal to zero for very large $\lambda$. We consider the so-called ``in-sample generalization error'', that is, we want to minimize
\BEAS
\frac{1}{n} \E \| \hat{y}_\lambda - z\|_2^2&  = &   {\rm bias}(\hat{H}_\lambda) + {\rm variance} (\hat{H}_\lambda),
\EEAS
where 
\BEAS
  {\rm bias}(\hat{H}_\lambda)  & =& \frac{1}{n} \| (\hat{H}_\lambda - \idm)z \|_2^2 \\
    {\rm variance}(\hat{H}_\lambda)  & =  &  \frac{\sigma^2}{n} \tr \hat{H}_\lambda^2.
\EEAS
The bias term is increasing in $\lambda$, while the variance term in decreasing in $\lambda$, and there is thus a trade-off between these two terms. To find the optimal $\lambda$, assumptions need to be made on the problem, regarding the eigenvalues of $K$, and the components of $z$ in the eigenbasis of $K$. That is, following the notations of~\citet[Section 4.3]{bach2013sharp}, we assume that the eigenvalues of $K$ are $\Theta( n \mu_i)$ (that is bounded from above and below by constants times $n\mu_i$) and the coordinates of $z$ in the eigenbasis of $K$ are $\Theta( \sqrt{n  \nu_i})$. The precise trade-off depends on the rates at which $\mu_i$ and $\nu_i$ decay to zero.

A classical situation is $\mu_i \sim i^{-2\beta}$ and $\nu_i \sim i^{-2\delta}$, where $\beta>1/2$ and $\delta>1/2$ (to ensure finite energy). As detailed in Appendix~\ref{app:ridge}:
\BIT
\item  the variance term is equivalent to $\frac{\sigma^2}{n} \lambda^{-1/2\beta}$ and does not depend on $z$ or $\delta$;
\item  the bias term depends  on both $\delta$ and $\beta$: for  signals which are not too smooth (i.e., not too fast decay of $\nu_i$, and thus small $\delta$), that is if $\delta < 2\beta + 1/2$, then the bias term is equivalent to $\lambda^{(2\delta-1)/2\beta}$ and we can thus find the optimal $\lambda$ as $(\sigma^2/n)^{\beta/\delta}$ leading to predictive performance of $(\sigma^2/n)^{1 - 1/2\delta}$, which happens to be optimal~\citep{caponnetto2007optimal}. However, when $\delta > 2\beta + 1/2$, a phenomenon called ``saturation'' occurs,  and the bias term is equivalent to $\lambda^2$ (independent of $\delta$), and the optimized predictive performance is
$(\sigma^2/n)^{1 - 1/(4\beta + 1)}$, which is not optimal anymore.
\EIT
As shown in  Appendix~\ref{app:ridge}, by reducing the bias, with $m$-step Richardson interpolation, we can show that the variance term is bounded by a constant times the usual one, while the bias term is equivalent to $\lambda^{(2\delta-1)/2\beta}$ for a wider range of $\delta$, that is, $ \delta <  2(m+1)\beta + 1/2$, which recovers for $m=0$ the non-extrapolated estimate. This leads to optimal statistical performance for a wider range of problems.

\begin{figure}
\begin{center}
\includegraphics[scale=.4]{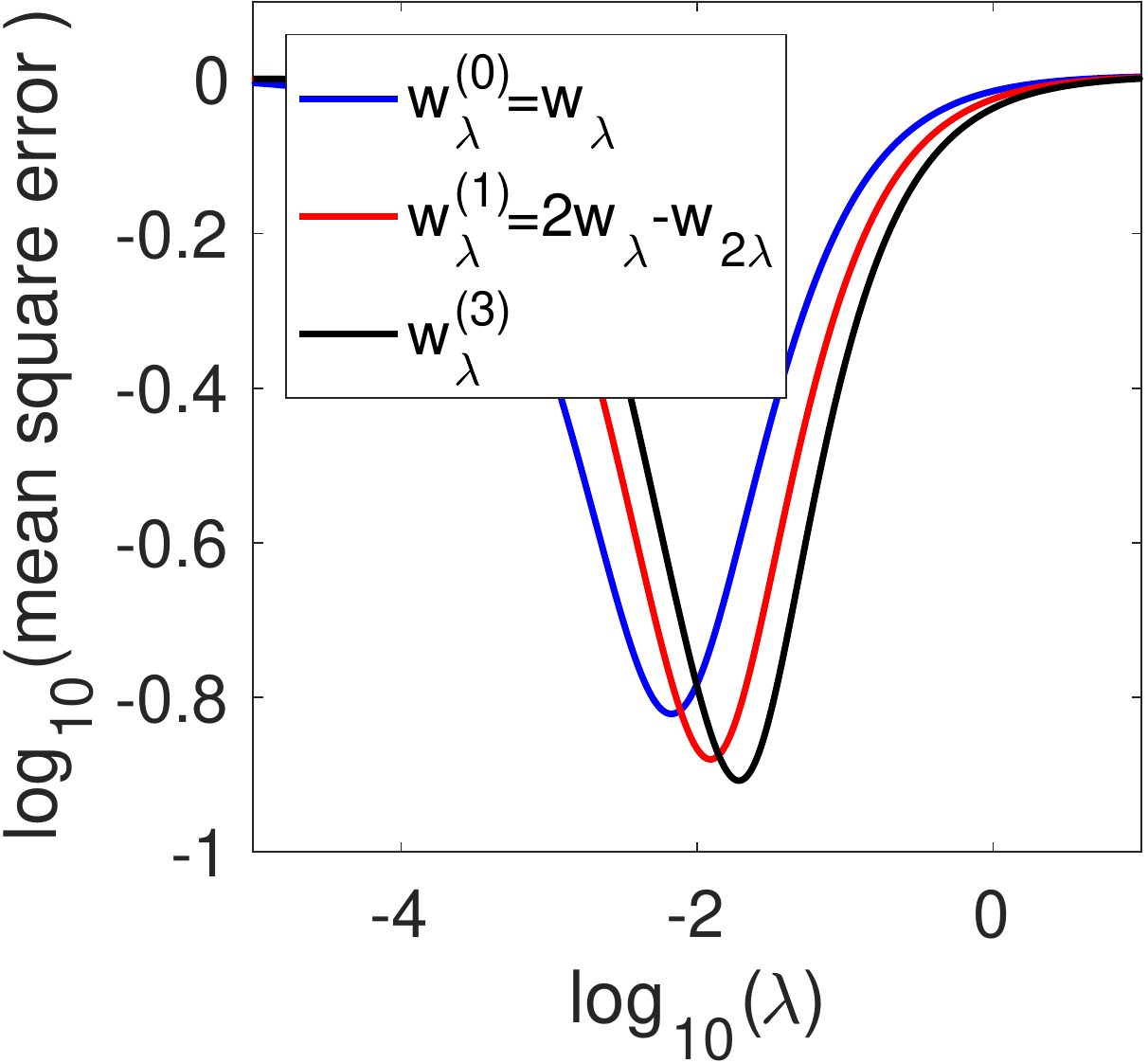} \hspace*{.1cm}
\includegraphics[scale=.4]{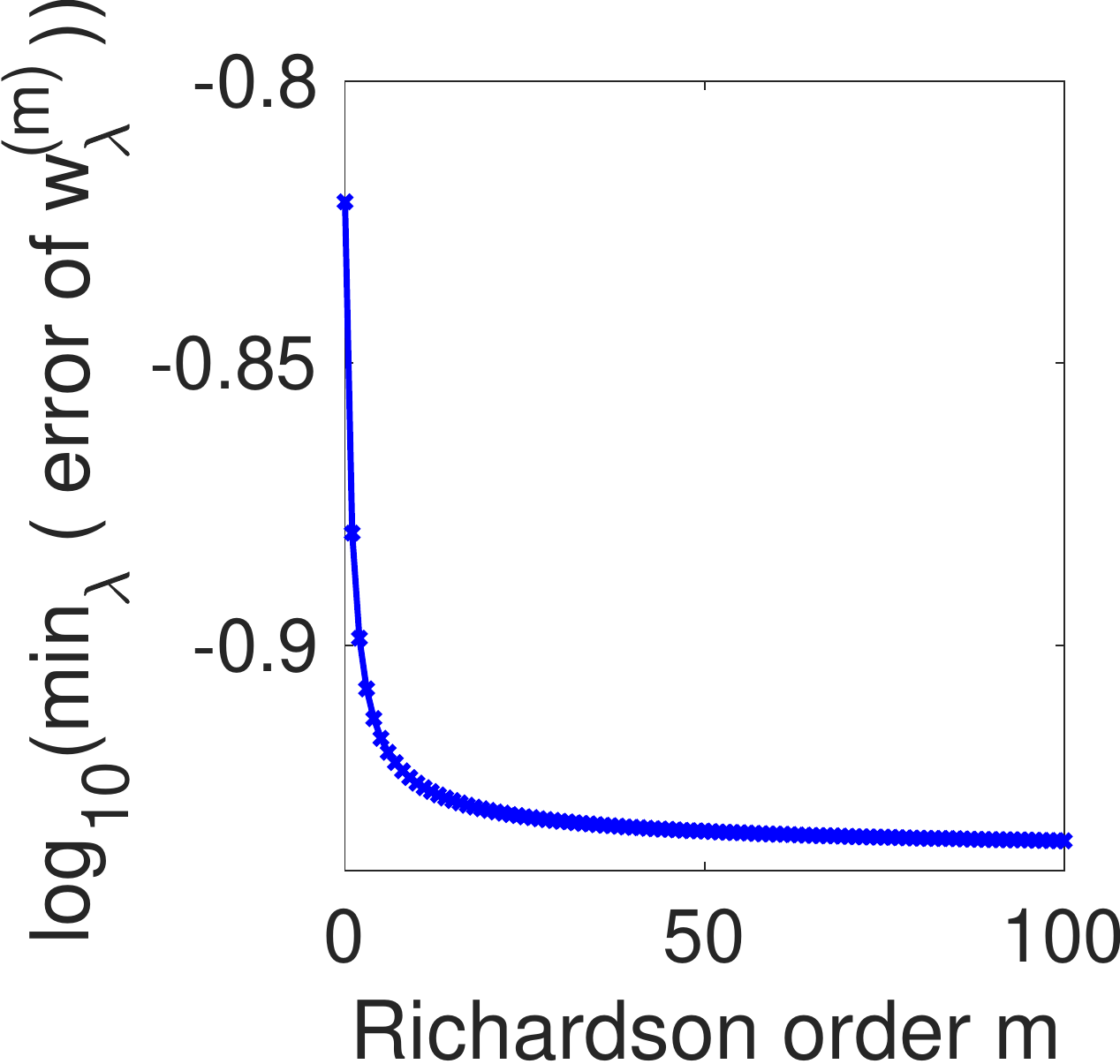}

\vspace*{-.2cm}

\caption{Left: regularization path for the classical iterate $w_\lambda$, one step of Richardson $w_\lambda^{(1)}$ and 3 steps $w_\lambda^{(3)}$. Right: optimal error as a function of the order of the Richardson step. \label{fig:ridge}}
\end{center}
\end{figure}

\paragraph{Experiments.}
As an illustration, we consider a ridge regression problem with data uniformly sampled on the unit sphere in dimension $d=40$ with $n=200$ observations, and $y$ generated as a linear function of the input plus some noise. We consider the rotation invariant kernel equal to the expectation $k(x,x') = \E_{\tau} (1+x^\top x') \sigma(\tau^\top x) \sigma(\tau^\top x')$, for $\tau$ uniform on the sphere. This is equal to, for $\sigma(\alpha) = 1_{\alpha>0}$~\citep[see][]{cho2009kernel,bach2017breaking}:
$$
k(x,y) \propto (1+x^\top x') \big[ \pi - \arccos (x^\top x') \big].
$$
When the number of observations $n$ tends to infinity, the eigenvalues of $\frac{1}{n} K$ are known to  converge to the eigenvalues of a certain infinite-dimensional operator~\citep{koltchinskii2000random}. As shown by~\citet{bach2017breaking}, the corresponding eigenvalues of the kernel matrix decay as $i^{-1-1/d}$. We consider $z$ generated as  a linear function so that $\nu_i$ has a finite number of non zero components in the eigenbasis of $K$. 

In the left plot of \myfig{ridge}, we consider $1$-step and $3$-step Richardson extrapolation and plot the generalization error (averaged over 10 replications) as a function of the regularization parameter: we can see that as expected, (a) with extrapolation the curves move to the right (we can use a larger $\lambda$ for a similar performance, which is advantageous as iterative algorithms are typically faster), and (b) the minimal error is smaller (which is true here because we learn a smooth function). In the right plot of \myfig{ridge}, we study the effect of increasing the order $m$ of extrapolation, showing that the larger the better with some saturation. With $m$ infinite, there will be overfitting as the corresponding spectral filter is non-stable, but this happens very slowly (see Appendix~\ref{app:gamma} for details).

\section{Conclusion}

In this paper, we presented various applications of Richardson extrapolation to machine learning optimization and estimation problems, each time with an asymptotic analysis showing the potential benefits. For example, when using the number of iterations of an iterative algorithm to perform extrapolation, we can accelerate Frank-Wolfe algorithms to have an asymptotic rates of $O(1/k^2)$ on polytopes for the step-size $\rho_k = 1/k$ and locally strongly-convex functions (this is achieved without extrapolation for the step-size $\rho_k  = 2/(k+1)$). When extrapolating based on the regularization parameter, we can accelerate Nesterov smoothing technique to have asymptotic rates close to $O(1/k^2)$.

We also highlighted situations where Richardson extrapolation does not bring any benefits (but does not degrade performance much), namely when applied to accelerated gradient descent or the Sinkhorn algorithm for optimal transport.

The analysis in this paper can be extended in a number of ways: (1) while the paper has focused on asymptotic analysis for simplicity, non-asymptotic analysis could be carried out to study more finely when acceleration starts, (2)~we have focused on deterministic optimization algorithms, and extensions to stochastic algorithms could be derived, along the lines of the work of~\citet{dieuleveut2017bridging}, (3) we have primarily focused on convex optimization algorithms but non-convex extensions, like done by~\citet{scieur2018nonlinear} for Anderson acceleration, could also lead to acceleration

\subsection*{Acknowledgements}

This work was funded in part by the French government under management of Agence Nationale de la Recherche as part of the ``Investissements d'avenir'' program, reference ANR-19-P3IA-0001 (PRAIRIE 3IA Institute). We also acknowledge 
 support the European Research Council (grant SEQUOIA 724063).

\bibliography{richardson}

\newpage
\appendix

\section{Preliminary considerations}
\label{app:preliminary}

We first start with lemmas that we will need in subsequent proofs; the second one shows strong convexity on a level set once we assume that the Hessian at optimum is positive definite.

\begin{lemma}
\label{lemma:1}
Assume   $f$  convex, three-times differentiable with bounded third-order derivatives, and a point $x_\ast $  such that $f''(x_\ast) $ is positive definite. Then there exists $c>0$ such that for any $x \in \rb^d$.
$$\frac{1}{2} (x - x_\ast)^\top f''(x_\ast) ( x - x_\ast) \leqslant c \Rightarrow f''(x) \succcurlyeq \frac{1}{2} f''(x_\ast).$$
\end{lemma}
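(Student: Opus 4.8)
The plan is to control how far $f''(x)$ can drift from $f''(x_\ast)$ in operator norm as a function of $\|x-x_\ast\|_2$, and then to translate the ellipsoidal sublevel condition into a Euclidean ball condition. Write $\mu = \lambda_{\min}(f''(x_\ast)) > 0$ and let $M$ denote a uniform bound on the third-order derivatives of $f$ (say, on the operator norm of the trilinear form $f'''(\cdot)$, or equivalently on $\sup_z \|f'''(z)\|$ in whatever norm is fixed in the hypotheses). First I would record the elementary calculus estimate
$$
\| f''(x) - f''(x_\ast) \|_{\mathrm{op}} \;=\; \Big\| \int_0^1 f'''\big(x_\ast + s(x-x_\ast)\big)[x-x_\ast]\, ds \Big\|_{\mathrm{op}} \;\leqslant\; M \, \| x - x_\ast\|_2 ,
$$
obtained by applying the mean value / fundamental theorem of calculus to $s \mapsto f''(x_\ast + s(x-x_\ast))$ (if $M=0$ the Hessian is constant and the lemma is trivial with any $c$, so assume $M>0$).

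Next I would chain the two inequalities. From $\|f''(x)-f''(x_\ast)\|_{\mathrm{op}} \leqslant M\|x-x_\ast\|_2$ we get, as symmetric matrices, $f''(x) \succcurlyeq f''(x_\ast) - M\|x-x_\ast\|_2 \, I$. Hence, whenever $\|x - x_\ast\|_2 \leqslant \mu/(2M)$, we have $f''(x) \succcurlyeq f''(x_\ast) - \tfrac{\mu}{2} I$, and since $f''(x_\ast) \succcurlyeq \mu I$ implies $\tfrac{\mu}{2} I \preccurlyeq \tfrac12 f''(x_\ast)$, this gives exactly $f''(x) \succcurlyeq \tfrac12 f''(x_\ast)$, which is the desired conclusion.

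Finally I would pick $c$ so that the ellipsoidal condition forces the ball condition. The hypothesis $\tfrac12 (x-x_\ast)^\top f''(x_\ast)(x-x_\ast) \leqslant c$ gives $\tfrac{\mu}{2}\|x-x_\ast\|_2^2 \leqslant c$, i.e.\ $\|x-x_\ast\|_2 \leqslant \sqrt{2c/\mu}$. Choosing $c = \mu^3/(8M^2)$ makes $\sqrt{2c/\mu} = \mu/(2M)$, so the implication closes. I do not expect any real obstacle here; the only point requiring a little care is the first step — stating the third-derivative bound with a consistent choice of tensor norm and making sure the integral representation of $f''(x)-f''(x_\ast)$ is justified by the three-times differentiability assumption — after which everything is a two-line estimate. (One could equally absorb dimension-dependent norm-equivalence constants into the definition of $M$ without affecting the argument.)

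\hfill$\BlackBox$
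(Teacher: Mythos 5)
Your proof is correct and follows essentially the same route as the paper's: Lipschitz continuity of the Hessian from the third-derivative bound, conversion of the ellipsoid condition to a Euclidean ball via $\lambda_{\min}(f''(x_\ast))$, and a choice of $c$ small enough so that $f''(x) - f''(x_\ast)$ is dominated by $\tfrac12 f''(x_\ast)$. Your bookkeeping is in fact slightly more careful than the paper's, which loosely writes $\| f''(x) - f''(x_\ast)\|_{\rm op} \leqslant A c$ where the dependence should scale as $\sqrt{c}$ (i.e., $\leqslant M\sqrt{2c/\mu}$), a minor slip that affects the validity of neither argument.
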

\begin{proof}
Since $f''(x_\ast) $ is positive definite, $\lambda_{\min}(f''(x_\ast)) >0$, and $\frac{1}{2} (x - x_\ast)^\top f''(x_\ast) ( x - x_\ast) \leqslant c$ implies that
$ \| x - x_\ast\|_2^2 \leqslant \frac{2c}{\lambda_{\min}(f''(x_\ast))}$. Thus, since third order derivatives of $f$ are bounded,  if $\frac{1}{2} (x - x_\ast)^\top f''(x_\ast) ( x - x_\ast) \leqslant c$, we have $\| f''(x) - f''(x_\ast) \|_{\rm op} \leqslant A c$ for some constant $A$, and thus
$$
f''(x) \succcurlyeq f''(x_\ast) - Ac \idm \succcurlyeq f''(x_\ast) - \frac{Ac}{\lambda_{\min}(f''(x_\ast))}  f''(x_\ast) \succcurlyeq\frac{1}{2} f''(x_\ast)
$$
if $Ac \leqslant \frac{1}{2} \lambda_{\min}(f''(x_\ast))$.
\end{proof}

\begin{lemma}
\label{lemma:2}
Assume   $f$  convex, three-times differentiable with bounded third-order derivatives, and that $x_\ast \in \rb^d$ is a minimizer of $f$ on $\rb^d$, such that $f''(x_\ast) $ is positive definite. Then $x_\ast$ is the unique minimizer of $f$  and there exists $c>0$ such that for any $x \in \rb^d$,
$$f(x) - f(x_\ast)  \leqslant c \Rightarrow f''(x) \succcurlyeq \frac{1}{2} f''(x_\ast).$$
\end{lemma}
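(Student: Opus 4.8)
The plan is to reduce everything to Lemma~\ref{lemma:1}. Let $c_1>0$ be the constant it produces, and introduce the closed ellipsoid $B = \{\, x \in \rb^d : \tfrac{1}{2}(x-x_\ast)^\top f''(x_\ast)(x-x_\ast) \leqslant c_1 \,\}$ centered at $x_\ast$. By Lemma~\ref{lemma:1}, $f''(x) \succcurlyeq \tfrac12 f''(x_\ast)$ for all $x \in B$, so $f$ is strongly convex on the convex set $B$; since $x_\ast$ is a minimizer of $f$ on $\rb^d$ we have $f'(x_\ast)=0$, and because $x_\ast$ lies in the interior of $B$ this makes $x_\ast$ the unique minimizer of $f$ over $B$. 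Consequently $f(x) > f(x_\ast)$ for every $x$ on the boundary $\partial B$, and since $\partial B$ is compact and $f$ continuous I can set $c := \min_{x \in \partial B}\big(f(x)-f(x_\ast)\big)$, which is well defined and strictly positive.

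Next I would check that this $c$ works. Suppose $f(x)-f(x_\ast) \leqslant c$ but $x \notin B$. The segment from $x_\ast$ (interior of $B$) to $x$ (outside $B$) meets $\partial B$ at some point $y = (1-\theta)x_\ast + \theta x$ with $\theta \in (0,1)$, and convexity of $f$ gives $f(y)-f(x_\ast) \leqslant \theta\big(f(x)-f(x_\ast)\big) \leqslant \theta c < c$, contradicting $f(y)-f(x_\ast)\geqslant c$. Hence $f(x)-f(x_\ast)\leqslant c$ forces $\tfrac12(x-x_\ast)^\top f''(x_\ast)(x-x_\ast) \leqslant c_1$, i.e.\ $x\in B$, and Lemma~\ref{lemma:1} then delivers $f''(x)\succcurlyeq \tfrac12 f''(x_\ast)$, which is exactly the desired implication.

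For uniqueness of the minimizer, I would argue directly: if $x'\neq x_\ast$ were another global minimizer, then convexity forces $f$ to be constant, equal to $f(x_\ast)$, on the segment $[x_\ast,x']$, so the scalar function $t\mapsto f\big(x_\ast + t(x'-x_\ast)\big)$ has vanishing second derivative at $t=0$, i.e.\ $(x'-x_\ast)^\top f''(x_\ast)(x'-x_\ast)=0$, contradicting the positive definiteness of $f''(x_\ast)$.

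The only step that needs any care is the one establishing $c>0$: this hinges on the observation that strong convexity of $f$ on $B$ together with $f'(x_\ast)=0$ makes $x_\ast$ the strict minimizer of $f$ over $B$, so $f>f(x_\ast)$ on the compact boundary $\partial B$. The ``crossing the boundary'' argument and the final invocation of Lemma~\ref{lemma:1} are then routine.
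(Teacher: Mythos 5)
Your proof is correct and follows essentially the same route as the paper: reduce to Lemma~\ref{lemma:1} by confining the sublevel set $\{f \leqslant f(x_\ast)+c\}$ inside the ellipsoid $B$ on which that lemma applies, using convexity to rule out escape across $\partial B$. The only cosmetic differences are that you obtain $c$ by compactness of $\partial B$ and continuity, whereas the paper instead derives the explicit bound $f(x)-f(x_\ast)\geqslant \tfrac14(x-x_\ast)^\top f''(x_\ast)(x-x_\ast)$ on $B$ via Taylor with integral remainder and takes $c = c_1/2$; and your boundary-crossing step via the convexity inequality $f(y)-f(x_\ast)\leqslant \theta(f(x)-f(x_\ast))$ is spelled out a bit more carefully than the paper, which leaves this containment implicit.
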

\begin{proof}
Using the above Lemma~\ref{lemma:1}, then there exists $c >0$ such that 
$$\frac{1}{2} (x - x_\ast)^\top f''(x_\ast) ( x - x_\ast) \leqslant c \Rightarrow f''(x) \succcurlyeq \frac{1}{2} f''(x_\ast).$$
We now show that  $f(x) - f(x_\ast) \leqslant c/2$ implies  that we stay in the region where  $\frac{1}{2} (x - x_\ast)^\top f''(x_\ast) ( x - x_\ast) \leqslant c $. Indeed, using the Taylor formula with integral remainder, for $\frac{1}{2} (x - x_\ast)^\top f''(x_\ast) ( x - x_\ast)  \leqslant c $ (where the lower bound on Hessians above holds), we have
 \BEAS
 f(x) - f(x_\ast)  & = &  0 +   \int_0^1(x-x_\ast)^\top f''(x_\ast + t(x-x_\ast))(x-x_\ast) (1-t)dt  \\
 & \geqslant 
 &  \frac{1}{2} \Big( \int_0^1(1-t)dt  \Big) (x-x_\ast)^\top f''(x_\ast)(x-x_\ast)  = \frac{1}{4} (x-x_\ast)^\top f''(x_\ast)(x-x_\ast)  
 \EEAS 
  and thus
$f(x) - f(x_\ast) \leqslant c/2$ implies $\frac{1}{2} (x - x_\ast)^\top f''(x_\ast) ( x - x_\ast) \leqslant c $, and we get the desired result.
\end{proof}

\section{Proof of Prop.~\ref{prop:gd} (averaged gradient descent)}
\label{app:gd}

In this particular case of unconstrained gradient descent, $f(x_k) - f(x_\ast) \leqslant \frac{1}{\gamma k} \| x_0 - x_\ast\|^2$, as soon as $\gamma \leqslant 1/L$~\citep{nesterov2013introductory}. This implies from Lemma~\ref{lemma:2} in Appendix~\ref{app:preliminary}, that for $k$ larger than some $k_0$, all iterates are such that $f''(x_k) \succcurlyeq \frac{1}{2} f''(x_\ast)$, and thus, after that $k$, we are in the strongly convex case, where $\| x_k - x_\ast \| \leqslant c' \rho^k $, where $c', \rho$ depend on the lowest eigenvalue $\mu$ of $f''(x_\ast)$ as $c' = \| x_{k_0} - x_\ast\|\rho^{-k_0}$ and $\rho = ( 1 - \gamma \mu / 2)$.

Thus, with $\bar{x}_k = \frac{1}{k} \sum_{i=0}^{k-1} x_i$,
 $k (\bar{x}_k - x_\ast)$ tends to $ \sum_{i=0}^\infty ( x_i - x_\ast)$ when $k \to +\infty$ (since the series is convergent), and
$$
k (\bar{x}_k - x_\ast) -  \sum_{i=0}^\infty ( x_i - x_\ast) = -\sum_{i=k}^\infty (x_i - x_\ast),
$$
leading to $k (\bar{x}_k - x_\ast) -  \sum_{i=0}^\infty ( x_i - x_\ast) = O ( \rho^k )$, and thus, with $\Delta = \sum_{i=0}^\infty ( x_i - x_\ast)$ (which is hard to compute a priori),
$$
\bar{x}_k  =  x_\ast + \frac{1}{k} \Delta + O(\rho^k).
$$

\section{Proof of Prop.~\ref{prop:FW} (Frank-Wolfe)}
\label{app:FW}

\paragraph{Preliminary remarks.}
We consider the step-sizes $\rho_k = \frac{1}{k}$ and $\rho_k = \frac{2}{k+1}$, for which the respective convergence rates for $f(x_k) - f(x_\ast)$ are of the form $\frac{c}{k}$ and $c \frac{ \log k}{k}$, for constants $c$ depending on the smoothness of $f$ and the diameter of the compact set~\citep[see, e.g.,][]{jaggi2013revisiting}. When running Frank-Wolfe (with any of the classical versions with open-loop step-sizes), we thus have $f(x_k) - f(x_\ast) = O((\log k )^\beta/k)$ with $\beta \in \{0,1\}$. 

Because of affine invariance of the Frank-Wolfe algorithm (and because $f''(x_\ast)$ is invertible), we can assume without loss of generality that $f''(x_\ast) = \idm$. Moreover, the constraint qualification implies that $f'(x_\ast) \neq 0$, thus, using Taylor expansion with integral remainder like in Lemma~\ref{lemma:2}, if $\| x - x_\ast\|^2$ is small enough then
$$
f(x) - f(x_\ast) \geqslant f'(x_\ast)^\top ( x - x_\ast) + \frac{1}{4} \| x - x_\ast\|_2^2.
$$
Since $x_\ast$ is the minimizer of $f$ on $\mathcal{K}$ and $x \in \mathcal{K}$, then $f'(x_\ast)^\top ( x - x_\ast) \geqslant 0$ and thus, we have that 
 $x_k - x_\ast = O((\log k)^{\beta/2}/\sqrt{k})$ and (from Lemma~\ref{lemma:1}), that $f$ is locally strongly-convex.

\paragraph{Analysis of Frank-Wolfe.}
The assumption which is made  implies that there exists $\sigma >0$ such that the ball of center $x_\ast$ and radius $\sigma$ intersected with the affine hull of $y_1,\dots,y_m$ is included in the convex hull of $y_1,\dots,y_m $, as well $\alpha \in (0,1)$ such that if $\cos (f'(x_\ast),z) \geqslant 1-\alpha$, then $\min_{y \in \mathcal{K}} z^\top y$ is attained only by elements of the convex hull of $y_1,\dots,y_m$. 

Thus, for $k$ large enough, that is greater than some~$k_0$ (which can be quantified from $\sigma$, $\alpha$ and other quantities), all elements of $\arg \min_{y \in \mathcal{K}} f'(x_{k-1})^\top y$ are in the convex hull of $y_1,\dots,y_m$, that is, only the correct extreme points are selected. Denoting $\Pi$ the orthogonal projection on the span of $y_1-x_\ast,\dots,y_m-x_\ast$, we have:
$$
x_{k} = ( 1- \rho_{k}) x_{k-1} + \rho_{k} \bar{x}_{k} \mbox{ where } \bar{x}_{k} \in \arg\min_{y \in \mathcal{K}} f'(x_{k-1})^\top y,
$$
and thus, subtracting $x_\ast$:
$$
x_{k} -x_\ast = ( 1- \rho_{k}) (x_{k-1}-x_\ast) + \rho_{k} (\bar{x}_{k} - x_\ast),$$
leading to, using the projections $\Pi$ and $\idm - \Pi$:
$$
\Pi ( x_{k} -x_\ast)= ( 1- \rho_{k}) \Pi (x_{k-1} -x_\ast) + \rho_{k}  (\bar{x}_{k} - x_\ast), $$
and, because $\bar{x}_{k} - x_\ast$ is in the span of $y_1-x_\ast,\dots,y_m-x_\ast$,
$$
(\idm - \Pi ) (  x_{k} - x_\ast) = ( 1- \rho_{k}) (\idm - \Pi )  ( x_{k-1} - x_\ast) .$$
We now consider these two terms separately.

\paragraph{Convergence of $(\idm - \Pi ) (  x_{k} - x_\ast) $.}
For $\rho_{k} = \frac{2}{k+1}$, we have, in closed form for $k \geqslant k_0$:
$$
(\idm - \Pi ) (  x_{k} - x_\ast)  = \frac{k-1}{k+1}(\idm - \Pi ) (  x_{k-1} - x_\ast)  = \frac{k_0 (k_0 + 1)}{k(k+1)}  (\idm - \Pi ) (  x_{k_0} - x_\ast) .
$$
For $\rho_{k} = \frac{1}{k}$, we have:
$$
(\idm - \Pi ) (  x_{k} - x_\ast)    = \frac{k-1}{k}(\idm - \Pi ) (  x_{k-1} - x_\ast)   = \frac{k_0}{k} (\idm - \Pi ) (  x_{k_0} - x_\ast) .
$$

\paragraph{Convergence of $\Pi (  x_{k} - x_\ast) $.}

We now look at the convergence of $\Pi ( x_{k}- x_\ast)$. 
We have
\BEAS
\| \Pi (x_{k} - x_\ast) \|^2 & = & \|  ( 1- \rho_{k}) \Pi ( x_{k-1} - x_\ast) + \rho_{k} ( \bar{x}_{k}  - x_\ast) \|^2 \\
& = &  ( 1- \rho_{k})^2 \|  \Pi (x_{k-1}  -x_\ast)\|^2 +  \rho_k^2 \|   \bar{x}_{k} - x_\ast \|^2
+ 2 ( 1-\rho_k) \rho_k (\bar{x}_k - x_\ast)^\top   \Pi ( x_{k-1}  - x_\ast).
\EEAS
Because of the ball assumption (that is the existence of a ball around $x_\ast$ that is contained in the supporting face of $\mathcal{K}$), we have
$$
 f'(x_{k-1})^\top ( x_{k-1} - \bar{x}_k)  = \max_{ y \in \mathcal{K}}  f'(x_{k-1})^\top ( x_{k-1} - y)  
 \geqslant   f'(x_{k-1})^\top ( x_{k-1} - x_\ast) + \| \Pi f'(x_{k-1})\| \sigma ,
 $$
 which leads to
 $$
  f'(x_{k-1})^\top ( x_\ast - \bar{x}_k) \geqslant   \| \Pi f'(x_{k-1})\| \sigma.
  $$
 
 Moreover, using a Taylor expansion with $f''(x_\ast)=\idm$, we have
   $f'(x_{k-1}) = f'(x_\ast) + f''(x_\ast) ( x_{k-1} - x_\ast) + O((\log k)^\beta/k) =  f'(x_\ast) + ( x_{k-1} - x_\ast) + O((\log k)^\beta/k)$. Moreover, by optimality of $x_\ast$, $f'(x_\ast)^\top( \bar{x}_k - x_\ast) = 0$. Therefore
 we have, for $\rho_k = O(1/k)$:
\BEAS
\| \Pi x_{k} - x_\ast \|^2 
 & \leqslant & ( 1-\rho_k) ^2 \|  \Pi x_{k-1}  -x_\ast\|^2
 +  \rho_k^2  {\rm diam}(\mathcal{K})^2
 -  2 ( 1-\rho_k) \rho_k  \|\Pi  f'(x_{k-1})\| \sigma 
+ O((\log k)^\beta/k^2)
\\
& \leqslant & ( 1-\rho_k) ^2 \|  \Pi x_{k-1}  -x_\ast\|^2
 -  2 ( 1-\rho_k) \rho_k  \|  \Pi x_{k-1}  -x_\ast\| \sigma 
+ O((\log k)^\beta/k^2).
\EEAS
For $\rho_k = 1/k$, we get
\BEAS
k^2 \| \Pi x_{k} - x_\ast \|^2 
& \leqslant  & ( k-1) ^2 \|  \Pi x_{k-1}  -x_\ast\|^2
 -  2 \sqrt{ (k-1) \|  \Pi x_{k-1}  -x_\ast\|^2}  \sigma 
+ O( \log k).
\EEAS
For $\rho_k = 2/(k+1)$, we get 
\BEAS
k^2 \| \Pi x_{k} - x_\ast \|^2 
& \leqslant  & ( k-1) ^2 \|  \Pi x_{k-1}  -x_\ast\|^2
 -  2 \sqrt{ (k-1) \|  \Pi x_{k-1}  -x_\ast\|^2}  \sigma 
+ O(1 ).
\EEAS

We then can use the following simple lemma on sequences\footnote{The proof is by induction. This is true for $k=0$.   If this is true for $k-1$, then either (a) $u_{k-1} \geqslant \frac{B^2}{A^2} v_{k-1}$, then $u_k \leqslant u_{k-1} \leqslant \frac{B^2}{A^2} v_{k}^2 + B  v_{k}$ because $v_{k-1} \leqslant v_k$, or  (b) $u_{k-1} \leqslant \frac{B^2}{A^2} v_{k-1}$ and then $u_k \leqslant u_{k-1} + B v_{k-1} \leqslant  \frac{B^2}{A^2} v_{k-1}^2 + B  v_{k-1} \leqslant  \frac{B^2}{A^2} v_{k}^2 + B  v_{k}$. }: if $u_k \geqslant 0$ such that $u_0 = 0$ and $u_{k} \leqslant u_{k-1} - A \sqrt{u_{k-1}} + B v_{k-1}$ for $(v_k)$ non-decreasing and positive, then $u_k \leqslant  \frac{B^2}{A^2} v_k^2 + B  v_k  $.

This leads to a bound in $O(1)$ for $k^2 \| \Pi x_{k} - x_\ast \|^2$  for $\rho_k = 2/(k+1)$, and in $O((\log k)^2)$ for     $\rho_k =1/k$.

Note that this is similar to the proof of the convergence of Frank-Wolfe algorithms to an interior point of the feasible set, for which we have a rate of convergence of $f(x_{k}) - f(x_\ast) =   O(1/k^2)$~\citep{chen2010super}.

\paragraph{Putting things together.}
We then have for $\rho_{k} = \frac{1}{k}$,
$$
x_k = x_\ast +  \frac{k_0}{k}  (\idm - \Pi )  x_{k_0} + O((\log k)^2/k^2),
$$
which leads to $f(x_k) - f(x_\ast) = \frac{k_0}{k}   f'(x_\ast)^\top  (\idm - \Pi )  x_{k_0} + O((\log k)^2/k^2)$, which can be put back into the original bound, to obtain the bound without the logarithmic factor (since we have now replaced $O(\log (k) / k)$ by $ O((\log k)^2/k^2) = o(1/k)$ in the start of the proof). The dependence in $k$ can here lead to an acceleration.

For $\rho_{k} = \frac{2}{k+1}$, 
$$
x_k = x_\ast +   \frac{k_0 (k_0 + 1)}{k(k+1)}  (\idm - \Pi )  x_{k_0} + O(1/k^2),
$$
which leads to $f(x_k) - f(x_\ast) =  \frac{k_0 (k_0 + 1)}{k(k+1)}  f'(x_\ast)^\top  (\idm - \Pi )  x_{k_0} + O(1/k^2)$. The dependence in $k$ does not lead to an acceleration.

Note that the terms in $O(1/k^2)$ are not amenable to Richardson extrapolation because they are oscillating.

\section{Proofs of Prop.~\ref{prop:nesterov} and  Prop.~\ref{prop:nesterov2} (Nesterov smoothing)}
\label{app:smoothing}

We denote by $x_\lambda$ the minimizer of $h(x) + g_\lambda(x)$, and $\eta_\lambda$ the corresponding dual variable. The dual variable $\eta_\lambda$ is unique because of the strong-convexity of $\varphi$, while the primal variable is unique due to the same reasoning as in Appendix~\ref{app:preliminary} (when $\lambda$ tends to zero, $x_\lambda$ has to be close to  $x_\ast$, and in the neighborhood of $x_\ast$, $h$ is strongly-convex).

The primal problem is
$$
\min_{x \in \rb^d} h(x) + \lambda \varphi^\ast\Big( \frac{Ax- b}{\lambda} \Big),
$$
while the dual problem is 
$$
\max_{\eta \in\Delta_m} - \lambda \varphi(\eta) - h^\ast(-A^\top \eta) - \eta^\top b.
$$
The primal and dual solutions $x_\lambda$ and $\eta_\lambda$ are related through duality for $\varphi$, that is,
$$
\eta_\lambda = \partial \varphi^\ast \Big( \frac{Ax_\lambda- b}{\lambda} \Big),
$$
and through duality for $h$, that is,
$$
x_\lambda = \partial h^\ast( - A^\top \eta_\lambda).
$$
Since we consider functions $\varphi$ which are uniformly bounded, then we know that $f(x_\lambda) -f(x_\ast) = O(\lambda)$ and thus because the function is locally strongly convex, we have: $x_\lambda - x_\ast = O(\lambda^{1/2})$.

\subsection{Quadratic penalty}
With a quadratic penalty, and small enough $\lambda$, the solution $\eta_\lambda$ will have the same sparsity pattern, using standard techniques from active set methods~\citep{nocedal2006numerical}---that is, show that the dual solution constrained to the same active set leads to a globally optimal primal/dual solution. 

Moreover, because, once restricted to $I$, the dual function is locally strongly convex, and because $\varphi$ is bounded, the deviation in dual function values is less than $O(\lambda)$ and thus
$\eta_\lambda - \eta_\ast = O(\lambda^{1/2})$ (which is a bound we are going to improve below).

The optimality conditions for the dual problem become (stationarity with respect to $\eta_I$):
$$
0 = - \lambda (\eta_\lambda)_I + A_I \partial h^\ast( - A_I^\top (\eta_\lambda)_I) -b_I.
$$
Since $h$ is twice differentiable at $x_\ast$, $h''(x_\ast)$ is invertible,  and $x_\ast \in \partial h^\ast(-A^\top \eta_\ast)$, by the implicit function theorem, $h^\ast$ is twice differentiable at $-A^\top \eta_\ast$, and its Hessian is $h''(x_\ast)^{-1}$. We can thus further expand the optimality condition above as
$$ 0 
=  - \lambda (\eta_\lambda)_I + A_I \partial h^\ast( - A_I^\top (\eta_\ast)_I)
-  A_I \partial^2 h^\ast( - A_I^\top (\eta_\ast)_I) A_I^\top (  (\eta_\lambda)_I -  (\eta_\ast)_I ) - b_I
+ O ( \| \eta_\lambda - \eta_\ast \|^2).
$$
Since $\partial^2 h^\ast( - A_I^\top (\eta_\ast)_I)  = h''(x_\ast)^{-1}$
and $b_I =  A_I \partial h^\ast( - A_I^\top (\eta_\ast)_I)$ (because of optimality of $x_\ast$ and $\eta_\ast$), this leads to
$0 =  - \lambda (\eta_\lambda)_I  + A_I h''(x_\ast)^{-1} A_I^\top (  (\eta_\lambda)_I -  (\eta_\ast)_I )   + O ( \| \eta_\lambda - \eta_\ast \|^2)$. Since we know already that  $\eta_\lambda - \eta_\ast = O(\lambda^{1/2})$, this leads to $\eta_\lambda - \eta_\ast =  O(\lambda)$,
which in turn leads to
$$
 (\eta_\lambda)_I =   (\eta_\ast)_I - \lambda \big[ A_I h''(x_\ast)^{-1} A_I^\top  \big]^{-1} (\eta_\ast)_I  + O(\lambda^2),
$$
which is the desired expansion for the dual variable. We then get:
$$
x_\lambda = \partial h^\ast( - A^\top \eta_\lambda)
= x_\ast + \lambda h''(x_\ast)^{-1} A_I^\top  \big[ A_I h''(x_\ast)^{-1} A_I^\top  \big]^{-1} (\eta_\ast)_I + O(\lambda^2)
= x_\ast + \lambda \Delta + O(\lambda^2).
$$
Thus, $2x_{\lambda} - x_{2 \lambda} = O(\lambda^2)$, and
\BEAS
h(x_{\lambda}) + g( x_{\lambda}) & = & h(x_\ast)+g(x_\ast) + \lambda h'(x_\ast)^\top \Delta + g(x_\ast + \lambda \Delta) - g(x_\ast)
+ O(\lambda^2)  \\
& = &  h(x_\ast)+g(x_\ast) + \big[ g(x_\ast + \lambda \Delta) - g(x_\ast)
 + \lambda h'(x_\ast)^\top \Delta  \big] + O(\lambda^2 ).
\EEAS

Since by optimality $h'(x_\ast) \in - \partial g(x_\ast)$, the term $\big[ g(x_\ast + \lambda \Delta) - g(x_\ast)
 + \lambda h'(x_\ast)^\top \Delta  \big]$ resembles a Taylor expansion of $g$ at $x_\ast$, but in general, we cannot have a term in $O(\lambda^2)$ because of the non-smoothness of $g$.
 For example, for the $\ell_1$-norm, we get
  $\big[ g(x_\ast + \lambda \Delta) - g(x_\ast)
 + \lambda f'(x_\ast)^\top \Delta  \big] = \lambda \| \Delta_{I^c} \|_1 +  \lambda f'(x_\ast)^\top \Delta$, is not zero, and only $O(\lambda)$.
 
 For Richardson extrapolation, we get:
\BEAS
h(2x_{\lambda} - x_{2 \lambda}) + g(2x_{\lambda} - x_{2 \lambda}) & = & h(x_\ast)+g(x_\ast) + O(\lambda^2),
\EEAS
and thus an improvement from $O(\lambda)$ to $O(\lambda^2)$.

\subsection{Entropic penalty}
 
For $\varphi(\eta) = \sum_{i=1}^m \big\{ \eta_i \log \eta_i - \eta_i \big\}$, we have
$\varphi'(\eta)_i = \log \eta_i$, and we cannot use anymore thet fact that $\eta_\lambda$ has the same sparsity pattern as $\eta_\ast$, since all components of $\eta_\lambda$ are non zero. However, since the entropy is bounded over the simplex, we still have 
$\eta_\lambda - \eta_\ast = O(\lambda^{1/2})$, and from the same reasoning as for the quadratic penalty, $x_\lambda - x_\ast = O(\lambda^{1/2})$.

Thus, by writing primal-dual optimality conditions, we get:
\BEAS
& & -\lambda \log \eta_\lambda + Ax_ \lambda- b = 0 \\
& & h'(x_\lambda) + A^\top \eta_\lambda = 0.
\EEAS
This implies that for $i \notin I$,
$$
\log \eta_i \sim (Ax_\ast-b)_i - \sup_{j} (Ax_\ast-b_j),
$$
which is strictly negative by assumption. Thus $(\eta_{I^c})_\lambda = O( \rho^{-1/\lambda})$ for a certain $\rho \in (0,1)$.
We   now show that $(\eta_I)_\lambda = (\eta_\ast)_I + \lambda \Delta + O(\lambda^2)$.
From the optimality conditions, we get:
$$
0 = - \lambda   (\log(\eta)_\lambda)_I + A_I \partial h^\ast( - A^\top \eta_\lambda ) -b_I,
$$
which leads to, with an additional Taylor expansion,
$$0 =   - \lambda (\log(\eta)_\ast)_I + A_I \partial h^\ast( - A_I^\top (\eta_\ast)_I)
-  A_I \partial^2 h^\ast( - A_I^\top (\eta_\ast)_I) A^\top (  \eta_\lambda -  \eta_\ast ) - b_I + O(\|\eta_\lambda - \eta_\ast\|^2)
+ O(\lambda \|\eta_\lambda - \eta_\ast\|).
$$
Since $\partial^2 h^\ast( - A_I^\top (\eta_\ast)_I)  = h''(x_\ast)^{-1}$
and $b_I =  A_I \partial h^\ast( - A_I^\top (\eta_\ast)_I)$ (because of optimality of $x_\ast$ and $\eta_\ast$), with moreover  $(\eta_{I^c})_\lambda = O( \rho^{-1/\lambda})$,
this leads to
$$ - \lambda (\log(\eta)_\ast)_I  - A_I h''(x_\ast)^{-1} A_I^\top (  (\eta_\lambda)_I -  (\eta_\ast)_I  )  = O(\| (\eta_\lambda)_I -  (\eta_\ast)_I\|^2)
+ O(\lambda \| (\eta_\lambda)_I -  (\eta_\ast)_I\|) + O(\rho^{-1/\lambda}),$$ which in turn leads to
$$
 (\eta_\lambda)_I =   (\eta_\ast)_I -  \lambda \big[ A_I f''(x_\ast)^{-1} A_I^\top  \big]^{-1} (\log(\eta_\ast))_I  + O(\lambda^2),
$$
which is the desired expansion for the dual variable. We then get:
$$
x_\lambda = \partial h^\ast( - A^\top \eta_\lambda)
= x_\ast + \lambda h''(x_\ast)^{-1} A_I^\top  \big[ A_I h''(x_\ast)^{-1} A_I^\top  \big]^{-1} (\log(\eta_\ast))_I + O(\lambda^2)
= x_\ast + \lambda \Delta + O(\lambda^2).
$$

Note that following~\citet{cominetti1994stable}, we could get a single proof for all $\varphi(\eta) = \sum_{i=1}^n \psi(\eta_i)$, with replacing $\log \eta_\ast$ by $\psi'(\eta_\ast)$, with a condition to ensure that the zero variables lead to vanishing terms.

\subsection{Higher-order expansions (Prop.~\ref{prop:nesterov2})}
In order to use $m$-step Richardson extrapolation, we need to have a bound of the form
$$
x_\lambda = x_\ast + \sum_{i=1}^m \Delta_i \lambda_i + O(\lambda^{m+1}).
$$
We only consider for simplicity the quadratic penalty, where we simply need an expansion of $(\eta_\lambda)_I$ (minor modifications would lead to a proof for the entropic penalty since there $\eta_{I^c}$ is exponentially small).

The expansion can be obtained from the implicit function theorem applied to 
the equation $0 = - \lambda (\eta_\lambda)_I + A_I \partial h^\ast( - A_I^\top (\eta_\lambda)_I) -b_I$, which is of the form $H( (\eta_\lambda)_I,\lambda) = 0$, where $H$ has high-order derivatives as long as $h^\ast$ is sufficiently differentiable, and the partial derivative with respect to $(\eta_\lambda)_I$ is an invertible matrix. The high-order differentiability of $h^\ast$ around $-A^\top \eta_\ast$ comes from the implicit function theorem, applied to the definition of the gradient of the Fenchel conjugate.

\begin{figure}
\begin{center}
\hspace*{-.2cm}
\includegraphics[scale=.41]{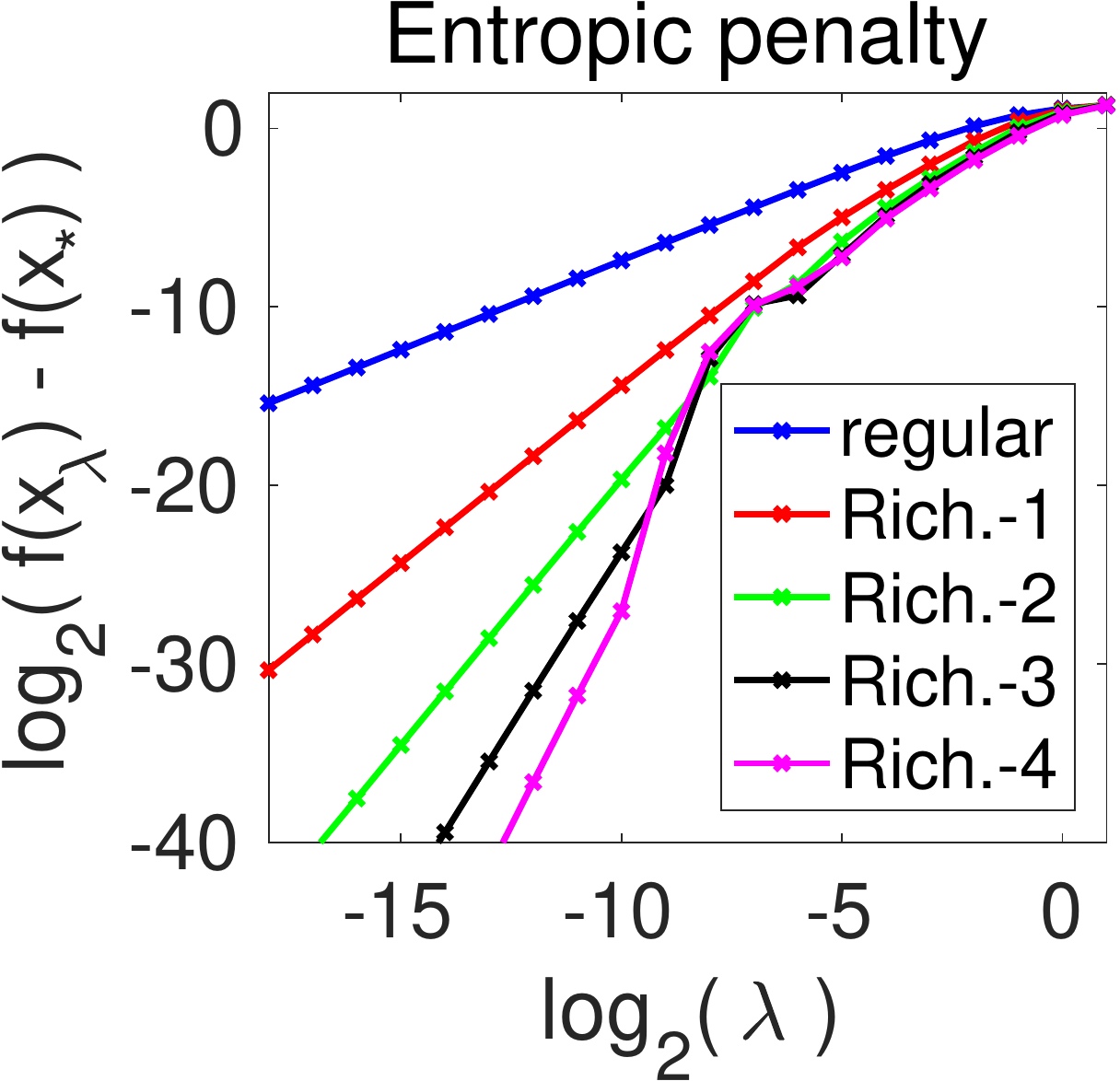} \hspace*{.4cm}
\includegraphics[scale=.41]{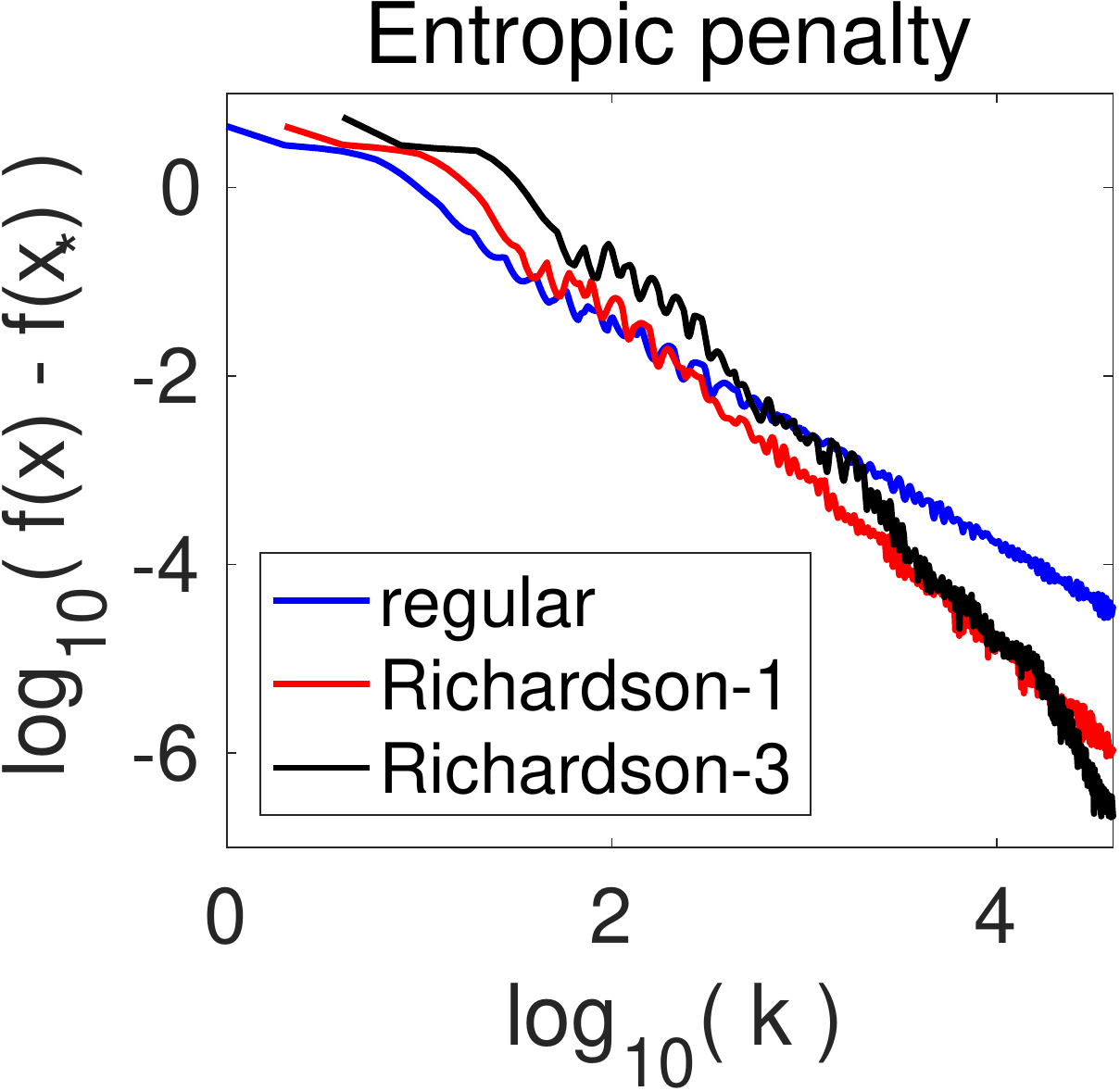} 

\vspace*{-.2cm}

\caption{Richardson extrapolation for Nesterov smoothing on a penalized Lasso problem, with regularization by the entropic penalty.  Left:  dependence of $f(x_\lambda) - f(x_\ast)$ on $\lambda$, for Richardson extrapolation of order $m$, we indeed recover a slope of $m+1$ in the log-log plot. Right: Optimization error vs.~number of iterations; where we go from a slope of -1 (blue curves) to improved slopes of $-4/3$ (red curve) and $-8/5$ (black curve). See text for details.\label{fig:smoothing_ent}}
\end{center}
\end{figure}

\section{Ridge regression}
\label{app:ridge}

\subsection{Standard extrapolation}

We have $\hat{y}_\lambda=K ( K + n\lambda \idm)^{-1}  y = \hat{H}_\lambda y$, and thus
$2 \hat{y}_\lambda - \hat{y}_{2\lambda}  = [ (2 \hat{H}_\lambda - \hat{H}_{2\lambda} ) ] y$, and we can compute explicitly
\BEAS
2 \hat{H}_\lambda - \hat{H}_{2\lambda} - \idm & = &   2 K ( K + n\lambda \idm)^{-1} -  K ( K + 2 n\lambda \idm)^{-1} -  \idm  \\
& = & 2n \lambda  \big[  ( K + 2n\lambda \idm)^{-1} - ( K + n\lambda \idm)^{-1} \big] = - 2 (n \lambda)^2 ( K + n\lambda \idm)^{-1} ( K + 2n\lambda \idm)^{-1} .
\EEAS
Thus,
\BEAS
{\rm bias}(2 \hat{H}_\lambda - \hat{H}_{2\lambda} ) & \leqslant &4 n^3 \lambda^4 z^\top
( K + n\lambda \idm)^{-4} z \\
{\rm variance }(2 \hat{H}_\lambda - \hat{H}_{2\lambda} ) & \leqslant & \frac{\sigma^2}{n} \big[
 4\tr [K ( K + n\lambda \idm)^{-1}]^2  +  \tr[ K ( K + 2n\lambda \idm)^{-1} ]^2
 \big] \leqslant \frac{5\sigma^2}{n}  \tr [K ( K + n\lambda \idm)^{-1}]^2  .
\EEAS
We have ${\rm bias}(2 \hat{H}_\lambda - \hat{H}_{2\lambda} ) \leqslant 4 {\rm bias}(  \hat{H}_\lambda  ) $
and ${\rm variance}(2 \hat{H}_\lambda - \hat{H}_{2\lambda} ) \leqslant 5 {\rm variance}(  \hat{H}_\lambda  ) $, so the two terms never incur more than a constant factor.

However, the bias can be much improved. Following~\citet[Section 4.3]{bach2013sharp}, if the eigenvalues of $K$ are $\Theta( n \mu_i)$ and the coordinates of $z$ in the eigenbasis of $K$ are $\Theta( \sqrt{n  \nu_i})$, we can compute equivalents (up to constant terms) of the bias and variance terms, for different types of decays. See Table~\ref{tab:decays} with $m=0$. Since the optimal predictive performance is $(\frac{\sigma^2}{n})^{1-1/(2 \delta) }$, the only potential gains to go from $m=0$ (no extrapolation)  to $m>0$ (extrapolation) occur when $(\nu_i)$ has a fast decay (that is, first, third and fifth lines).

For the first line in Table~\ref{tab:decays}, we will show that the bias term for Richardson extrapolation is of the order $\lambda^4$ if $2\delta \!> \!8 \beta \!+\! 1$, and equal to $ \lambda^{(2 \delta -1)/2 \beta}$ when 
$2\delta \!<  \!8 \beta \!+\! 1$. We will thus increase the regime of validity of the bias term that leads to optimal performance, from 
$2\delta \! <  \!4 \beta \!+\! 1$ to $2\delta \! <  \!8 \beta \!+\! 1$. More generally, as we show below in Appendix~\ref{app:ridgemulti}, the bias for $m$-step extrapolation is 
proportional to $n^{2m+1} \lambda^{2m+2} z^\top
( K + n\lambda \idm)^{-2m-2} z$, and we bound it directly following closely the computations of~\citet[Appendix C.2]{bach2013sharp}:
\BEAS
n^{2m+1} \lambda^{2m+2} z^\top
( K + n\lambda \idm)^{-2m-2} z  & = & n^{2m+2}  \lambda^{2m+2} \sum_{i=1}^n \frac{\nu_i}{ (n\mu_i + n\lambda)^{2m+2}} = 
 \lambda^{2m+2} \sum_{i=1}^n \frac{\nu_i}{ (\mu_i + \lambda)^{2m+2}} \\
& = & \lambda^{2m+2}\sum_{i=1}^n \frac{i^{-2\delta}}{ (i^{-2\beta} + \lambda)^4} \leqslant 
2  \lambda^{2m+2} \int_1^n \frac{t^{-2\delta}}{ (t^{-2\beta} + \lambda)^{2m+2}} dt \\
& = &  2 \lambda^{2m+2} \int_1^n \frac{t^{4(m+1)\beta - 2\delta}}{ (1 + \lambda t^{2\beta} )^{2m+2}} dt.
 \EEAS
If $2\delta - 4(m+1) \beta > 1$, then we have an upper bound of $\displaystyle 2 \lambda^{2m+2} \int_1^n  {t^{4(m+1) \beta - 2\delta}}dt = O(\lambda^{2m+2})$.

If $2\delta - 4(m+1)  \beta < 1$, then we can further bound
\BEAS
2 \lambda^{2m+2}\int_1^n \frac{t^{4(m+1)\beta  - 2\delta}}{ (1 + \lambda t^{2\beta} )^{2m+2}} dt & = & 
2 \lambda^{2m+2} \int_\lambda^{\lambda n^{2\beta}} \frac{[(u/\lambda)^{1/2\beta}]^{4(m+1)\beta - 2\delta+1 }}{ (1 + u )^{2m+2}} \frac{1}{2\beta}du
\\
 & & \mbox{ with the change of variable } u = \lambda t^{2\beta}, \\
 & = & 
2 \lambda^{2m+2 -(2m+2) + \delta /\beta - 1/(2\beta)} \int_\lambda^{\lambda n^{2\beta}} \frac{u^{(4(m+1)\beta - 2\delta + 1) ) /((4(m+1)\beta) }}{ (1 + u )^{2m+2}} \frac{1}{2\beta}du \\
& = &  O ( \lambda^{(2\delta-1)/(2\beta)}),
\EEAS
because the integral in convergent. The bias term for the third and fifth lines of  Table~\ref{tab:decays} needs modifications in exactly the same way the corresponding proof from~\citet[Appendix C.2]{bach2013sharp}.

In order to find the optimal regularization parameter, we minimize with respect to $\lambda$, that leads to
$ \lambda^{2(m+1)+1/2\beta} \propto ( \sigma^2 / n)$, leading to an optimal prediction performance proportional
to $(\sigma^2/n)^{\tau}$, with 
$\tau = \frac{2(m+1)}{2(m+1)+\frac{1}{2\beta}} = 1 -  \frac{\frac{1}{2\beta} }{2(m+1)+\frac{1}{2\beta}}
= 1 -  \frac{1 }{4(m+1)\beta +1} $.

\begin{table}

\centering
\hspace*{-.5cm}
\begin{tabular}{|l|l|l|l|l|l|l|}
\hline
$(\mu_i)$& $(\nu_i)$ & var.& bias & optimal $\lambda$ & pred. perf. &  condition    \\[.05cm]
\hline
\\[-.7cm]
& & & & &  &   \\
$  i^{-2\beta} $  & $ i^{-2\delta} $  & $  \frac{\sigma^2}{n} \lambda^{-1/2\beta}$ & $  \lambda^{2\textcolor{red}{(m+1)}}$  &  
$ (\frac{\sigma^2}{n})^{1/(2\textcolor{red}{(m+1)}+ 1/2\beta)}$ & $(\frac{\sigma^2}{n})^{1-1/(4 \textcolor{red}{(m+1)}\beta+ 1)}$ &  
if $2\delta \!> \!4\textcolor{red}{(m+1)} \beta \!+\! 1$\\[.1cm]
$  i^{-2\beta} $  & $ i^{-2\delta} $  & $   \frac{\sigma^2}{n} \lambda^{-1/2\beta}$ & $ \lambda^{(2 \delta -1)/2 \beta}$  & $(\frac{\sigma^2}{n})^{\beta/\delta}$ &  $(\frac{\sigma^2}{n})^{1-1/(2 \delta) }$ &   
if $2\delta \! <\! 4 \textcolor{red}{(m+1)}\beta\! +\! 1$\\[.1cm]
$   i^{-2\beta} $  & $e^{-\kappa i}$  & $  \frac{\sigma^2}{n} \lambda^{-1/2\beta}$ & $\lambda^{2\textcolor{red}{(m+1)}}$ & $ (\frac{\sigma^2}{n})^{1/(2\textcolor{red}{(m+1)} + 1/2 \beta)}$ & $(\frac{\sigma^2}{n})^{1-1/(4 \textcolor{red}{(m+1)}\beta+ 1)}$ &   \\[.1cm]
$ e^{-\rho i} $  & $  i^{-2\delta} $  &   $ \frac{\sigma^2}{n} \log \frac{1}{\lambda}$  & $  ( \log \frac{1}{\lambda})^{1-2\delta}$&$ \exp( - (\frac{\sigma^2}{n})^{-1/(2\delta)} )$  &$(\frac{\sigma^2}{n})^{1-1/(2 \delta) }$     & \\[.1cm]
$  e^{-\rho i} $  & $e^{-\kappa i}$  & $ \frac{\sigma^2}{n} \log \frac{1}{\lambda}$ &  $ \lambda^{2\textcolor{red}{(m+1)}}$  &  $(\frac{\sigma^2}{n})^{1/2}$& $(\frac{\sigma^2}{n}) \log (\frac{n}{\sigma^2}) $      & if $\kappa > 2\rho$ \\[.1cm]
$  e^{-\rho i} $  & $e^{-\kappa i}$  & $ \frac{\sigma^2}{n} \log \frac{1}{\lambda}$ & $\lambda^{\kappa/\rho} $ & $(\frac{\sigma^2}{n})^{\rho/\kappa}$ & $(\frac{\sigma^2}{n}) \log (\frac{n}{\sigma^2}) $   & if $\kappa <2 \rho$ \\[.1cm]
\hline
\end{tabular}

\caption{
Variance, bias, optimal regularization parameter, corresponding prediction performance, for several decays of eigenvalues and signal coefficients, and $m$-th order Richardson extrapolation (we always assume $\delta>1/2$, $\beta>1/2$, $\rho>0$, $\kappa>0$, to make the series summable). All entries are functions of $i$, $n$ or $\lambda$ and are only asymptotically bounded below and above, i.e., correspond to the asymptotic notation $\Theta(\cdot)$. Adapted from~\citet{bach2013sharp}, changes due to potential Richardson extrapolation are highlighted \textcolor{red}{in red}.    }
\label{tab:decays}

 \vspace*{-.2cm}

\end{table}

\subsection{Multiple extrapolation steps}
\label{app:ridgemulti}

Using $m$ steps of Richardson interpolation, we prove that we can get this regime as long as
$2\delta \! <  \! 4(m+1)\beta \!+\! 1$, and thus with no limit if $m$ is large enough. 

We thus consider
\BEAS
\hat{H}^{(m)}_\lambda & = &  \sum_{i=1}^{m+1}\alpha^{(m)}_i \hat{H}_{i \lambda} 
=  \sum_{i=1}^{m+1} \alpha^{(m)}_i K ( K + n\lambda i \idm)^{-1} 
= \idm  -n \lambda \sum_{i=1}^{m+1} i\alpha^{(m)}_i   ( K + n\lambda i \idm)^{-1} ,
\EEAS
using $\sum_{i=1}^{m+1}   \alpha^{(m)}_i = 1$. We then use
\BEAS
& & ( K + n\lambda i \idm)^{-1}  
\\ & = & 
 ( K + n \lambda \idm)^{-1/2} 
 \big[ \idm + n \lambda (i-1) ( K + \lambda \idm)^{-1}   \big]^{-1}
  ( K + n \lambda \idm)^{-1/2} 
\\
 & = & ( K + n \lambda \idm)^{-1/2} 
 \sum_{j=0}^{m-1} (-1)^j \big[  n \lambda  ( K + n\lambda(i-1) \idm)^{-1}    \big]^j
  ( K + n\lambda \idm)^{-1/2}   \\
  & & \hspace*{2cm} + (-1)^{m} ( K + n \lambda \idm)^{-1/2}  \big[  n \lambda(i-1)  ( K + n\lambda \idm)^{-1}    \big]^{m}  \big[ \idm + n \lambda  ( K + \lambda(i-1) \idm)^{-1}  \big]^{-1}  ( K + n \lambda \idm)^{-1/2}
\\
 & = & 
 \sum_{j=0}^{m-1} (-1)^j  ( n \lambda)^j  ( K + n\lambda \idm)^{-j-1}(i-1)^j  
 +  (-1)^{m}  ( n \lambda)^{m}  ( K + n\lambda \idm)^{-m} (i-1)^{m}  ( K + n\lambda i \idm)^{-1}  
 ,
\EEAS
where we have used the sum of a geometric series, with $A = n \lambda(i-1)  ( K + \lambda \idm)^{-1} $:
\BEAS
(I+A)^{-1} & = & \sum_{j=0}^{m-1} (-1)^j A^j + (-1)^{m} (I+A)^{-1} A^{m}.
\EEAS
Putting things together, we get
\BEAS
\hat{H}^{(m)}_\lambda
- \idm
& = & 
-n \lambda \sum_{i=1}^{m+1} i \alpha^{(m)}_i   \sum_{j=0}^{m-1} (-1)^j  ( n \lambda)^j  ( K + n\lambda \idm)^{-j-1} (i-1)^j   \\
& & -n \lambda \sum_{i=1}^{m+1} i \alpha^{(m)}_i (-1)^{m}  ( n \lambda)^{m}  ( K + n\lambda \idm)^{-m} (i-1)^{m}  ( K + n\lambda i \idm)^{-1}  .
\EEAS
The first term is exactly zero by definition of the Richardson weights $\alpha_i^{(m)}$, and we are left with 
\BEAS
 (-1)^{m+1} \big[ \hat{H}^{(m)}_\lambda
- \idm \big]
& = &   ( n \lambda)^{m+1}  \sum_{i=1}^{m+1} i \alpha^{(m)}_i (i-1)^{m}  ( K + n\lambda \idm)^{-m}( K + n\lambda i \idm)^{-1}  .
\EEAS
Thus
$$( n \lambda)^{-m-1}   ( K + n\lambda \idm)^{-m/2-1/2} \big[ \hat{H}^{(m)}_\lambda
- \idm \big] ( K + n\lambda \idm)^{-m/2-1/2}
$$
has an operator norm bounded by a constant that depends on $m$ (and not on other quantities like $\lambda$ or $n$). This allows to bound the bias  as
\BEAS
{\rm bias}(\hat{H}^{(m)}_\lambda ) & = &  \frac{1}{n} \| (\hat{H}^{(m)}_\lambda
- \idm ) z\|^2 \leqslant \square_m n^{2m+1} \lambda^{2m+2} z^\top
( K + n\lambda \idm)^{-2m-2} z .
\EEAS
Similarly to the case $m=1$, $(\hat{H}_\lambda^{(m)} )^2$ is upper-bounded by a sum of matrices which are all less than $K^2(K+n\lambda \idm)^{-2}$ (for the order between symmetric matrices), leading to a bound on the variance term as:
\BEAS
{\rm variance }(\hat{H}^{(m)}_\lambda) & =  & \sigma^2 \tr  [\hat{H}_\lambda^{(m)} ]^2 \leqslant \triangle_m \frac{\sigma^2}{n}  \tr [K ( K + n\lambda \idm)^{-1}]^2 .
\EEAS
Here $\square_m$ and $\triangle_m$ are constants that could be explicitly computed. As shown in the section, these constants have to diverge when $m$ tends to $+\infty$.

\subsection{Explicit expression}
\label{app:gamma}

\paragraph{Expression for $\alpha_i^{(m)}$.} We first give a proof for the explicit expression for $\alpha_i^{(m)}$. One approach is to solve Vandermonde matrices like done by~\citet{pages2007multi} in a similar context, but given the conjecture, we can simply check that it satisfies \eq{alpha1} and \eq{alpham}.

For \eq{alpha1}, we have:
\BEAS
\sum_{i=1}^{m+1} (-1)^{i-1} { m+1 \choose i} & = & 1 - \sum_{i=0}^{m+1} (-1)^{i} { m+1 \choose i} = 0,
\EEAS
using the binomial formula.
For \eq{alpham}, we have:
\BEAS
\sum_{i=1}^{m+1} (-1)^{i-1} { m+1 \choose i} i & = & \sum_{i=1}^{m+1} (-1)^{i-1} (m+1) { m \choose i-1} = (m+1) \times 0 = 0,
\EEAS
and, more generally for any $j \in \{1,\dots,m\}$,
\BEAS
\sum_{i=j}^{m+1} (-1)^{i-1} { m+1 \choose i} i(i-1) \cdots (i-j+1) & = & \sum_{i=j}^{m+1} (-1)^{i-1} (m+1)m \cdots (m-j+2) { m-j+1 \choose i-j} =  0,
\EEAS
also using the binomial formula. This shows that $\sum_{i=j}^{m+1} (-1)^{i-1} { m+1 \choose i} i^j = 0$ for all $j \in \{1,\dots,m\}$, which finishes the proof of the formula for $\alpha_{i}^{(m)}$.

\paragraph{Expression for the smoothing matrix.} We can now provide an explicit expression for the extrapolated smoothing matrix. We have:
\BEAS
\hat{H}^{(m)}_\lambda & = &  \sum_{i=1}^{m+1}\alpha^{(m)}_i \hat{H}_{i \lambda} \\
& = &   \sum_{i=1}^{m+1} \alpha^{(m)}_i K ( K + n\lambda i \idm)^{-1}  = \idm  -n \lambda \sum_{i=1}^{m+1} i\alpha^{(m)}_i   ( K + n\lambda i \idm)^{-1}  \\
& = & \idm  -n \lambda \sum_{i=1}^{m+1} i (-1)^{i-1} { m+1 \choose i}    ( K + n\lambda i \idm)^{-1} = s(K/(n \lambda)),
\EEAS
where $s: \rb^{n \times n } \to \rb^{n \times n}$ is a spectral function defined on symmetric matrices from a function (note the classical overloaded notation) $s: \rb \to \rb$, by keeping eigenvectors unchanged and applying $s$ to eigenvalues~\citep[][Chapter 11]{golub1989matriz}. We have, using a representation by an integral and the binomial formula:
\BEAS
s(\mu) & = & 
1  -  \sum_{i=1}^{m+1} i (-1)^{i-1} { m+1 \choose i}    \frac{1}{
 \mu  +  i } =  
1  -  \sum_{i=1}^{m+1} (m+1) (-1)^{i-1}{ m \choose i-1}    
\int_0^1 t^{ \mu +  i - 1} dt\\
 & = & 
1  -  (m+1) \int_0^1  t^{ \mu }  \Big[  \sum_{i=1}^{m+1} (-1)^{i-1} { m \choose i-1}    
t^{   i - 1} \Big] dt\\
 & = & 
1  -  (m+1) \int_0^1  t^{ \mu }  (1-t)^mdt = 
1  -  (m+1)  \frac{\Gamma(1+\mu ) \Gamma(1+m)}{\Gamma(2 + m  +\mu ) },
\EEAS
using the expression of the Beta function in term of the Gamma function $\Gamma$~\citep{abramowitz1988handbook}. We can simply the expression as follows:
\BEAS
s(\mu) & = & 1 - \frac{ (m+1)! }{   (\mu+1)(\mu+2)\cdots (\mu+m+1)}.
\EEAS
This provides a new closed-form expression for Richardson extrapolation, as well as it provides an equivalent when $m$ tends to $+\infty$, as $s(\mu) \sim 1 - \frac{ \Gamma(1+\mu)} {m^\mu}$. Therefore, when $m \to +\infty$, and $\mu>0$, then $s(\mu)$ tends to one (which implies that the constants $\square_m$ and $\triangle_m$ cannot remain bounded). Therefore, the variance term converges to $\sigma^2$,  but very slowly: the method does not blow up, but does not learn either.

\end{document}